\documentclass{article}

\usepackage{microtype}
\usepackage{graphicx}
\usepackage{subcaption}
\usepackage{booktabs}

\usepackage{hyperref}

\usepackage[accepted]{icml2026}

\newcommand{\W}{\mathbf{W}}
\newcommand{\K}{\mathbf{K}}
\newcommand{\x}{\mathbf{x}}

\usepackage{amssymb}
\usepackage{listings}
\usepackage{amsthm}

\usepackage{tikz}
\usetikzlibrary{angles}
\usepackage{tikz-network}
\usetikzlibrary{patterns,decorations.pathreplacing,math}
\usetikzlibrary{decorations.markings}

\tikzset{
    arrowhead/.pic = {
    \draw[thick, rotate = 45] (0,0) -- (#1,0);
    \draw[thick, rotate = 45] (0,0) -- (0, #1);
    }
}

\definecolor{mygreen}{RGB}{34,139,34}
\definecolor{mypink}{RGB}{255,100,203}

\usepackage[capitalize,noabbrev]{cleveref}

\theoremstyle{plain}
\newtheorem{theorem}{Theorem}[section]
\newtheorem{proposition}[theorem]{Proposition}

\theoremstyle{definition}

\theoremstyle{remark}

\usepackage[textsize=tiny]{todonotes}

\newcommand{\AlgComment}[1]{\hfill{\footnotesize\textcolor{blue}{$\triangleright$~#1}}}

\icmltitlerunning{SWING: Unlocking Implicit Graph Representations for Graph Random Features}
\begin{document}

\twocolumn[
  \icmltitle{SWING: Unlocking Implicit Graph Representations for Graph Random Features}

  \icmlsetsymbol{equal}{*}
  \icmlsetsymbol{senior}{\textdagger}

  \begin{icmlauthorlist}
    \icmlauthor{Alessandro Manenti}{equal,usi}
    \icmlauthor{Avinava Dubey}{google}
    \icmlauthor{Arijit Sehanobish}{arijit}
    \icmlauthor{Cesare Alippi}{usi,poli}
    \icmlauthor{Krzysztof Choromanski}{equal,senior,deepmind}
  \end{icmlauthorlist}

  \icmlaffiliation{usi}{Università della Svizzera italiana, IDSIA}
  \icmlaffiliation{google}{Google Research}
  \icmlaffiliation{arijit}{Independent Researcher}
  \icmlaffiliation{poli}{Politecnico di Milano}
  \icmlaffiliation{deepmind}{Google DeepMind and Columbia University}

  \icmlcorrespondingauthor{Alessandro Manenti}{alessandro.manenti@usi.ch}
  \icmlcorrespondingauthor{Krzysztof Choromanski}{kchoro@google.com}

  \icmlkeywords{Machine Learning, ICML}

  \vskip 0.3in
]

\printAffiliationsAndNotice{} 

\begin{abstract}
We propose SWING: Space Walks for Implicit Network Graphs, a new class of algorithms for computations involving Graph Random Features \citep{grf-1} on graphs given by implicit representations (i-graphs), where edge-weights are defined as bi-variate functions of feature vectors in the corresponding nodes. Those classes of graphs include several prominent examples, such as: \textit{$\epsilon$-neighborhood} graphs, used on regular basis in machine learning. Rather than conducting walks on graphs' nodes, those methods rely on walks in continuous spaces, in which those graphs are embedded. To accurately and efficiently approximate original combinatorial calculations, SWING applies customized Gumbel-softmax sampling mechanism with linearized kernels, obtained via random features coupled with importance sampling techniques. This algorithm is of its own interest. SWING relies on the deep connection between implicitly defined graphs and Fourier analysis, presented in this paper. SWING is accelerator-friendly and does not require input graph materialization. We provide detailed analysis of SWING and complement it with thorough experiments on different classes of i-graphs.
\end{abstract}

\vspace{-8mm}
\section{Introduction \& Related Work}
\label{sec:intro}
\vspace{-1.5mm}
Graphs are ubiquitous in machine learning (ML), representing prominent data modality that provides important structural inductive bias in the form of pairwise relationships between objects represented as their nodes/vertices. For that reason, graphs have found several applications in various ML applications benefitting from rich relational signal coming from data. Those range from molecular modeling (drug discovery, property prediction, protein-ligand modeling; see: \citep{molecular-convolutions-4, molecular-convolutions-6, molecular-convolutions-5, molecular-convolutions-1, molecular-convolutions-2, molecular-convolutions-3, protein-1, protein-2, protein-3}), through recommender systems \citep{rec-1, rec-2, rec-3} and manifold learning \citep{man-1, man-2, man-3} to 3D modeling for point clouds and robotics \citep{gr-1, rob-1, rob-2, rob-3}.

\begin{figure*}
    \centering
    \includegraphics[width=\linewidth]{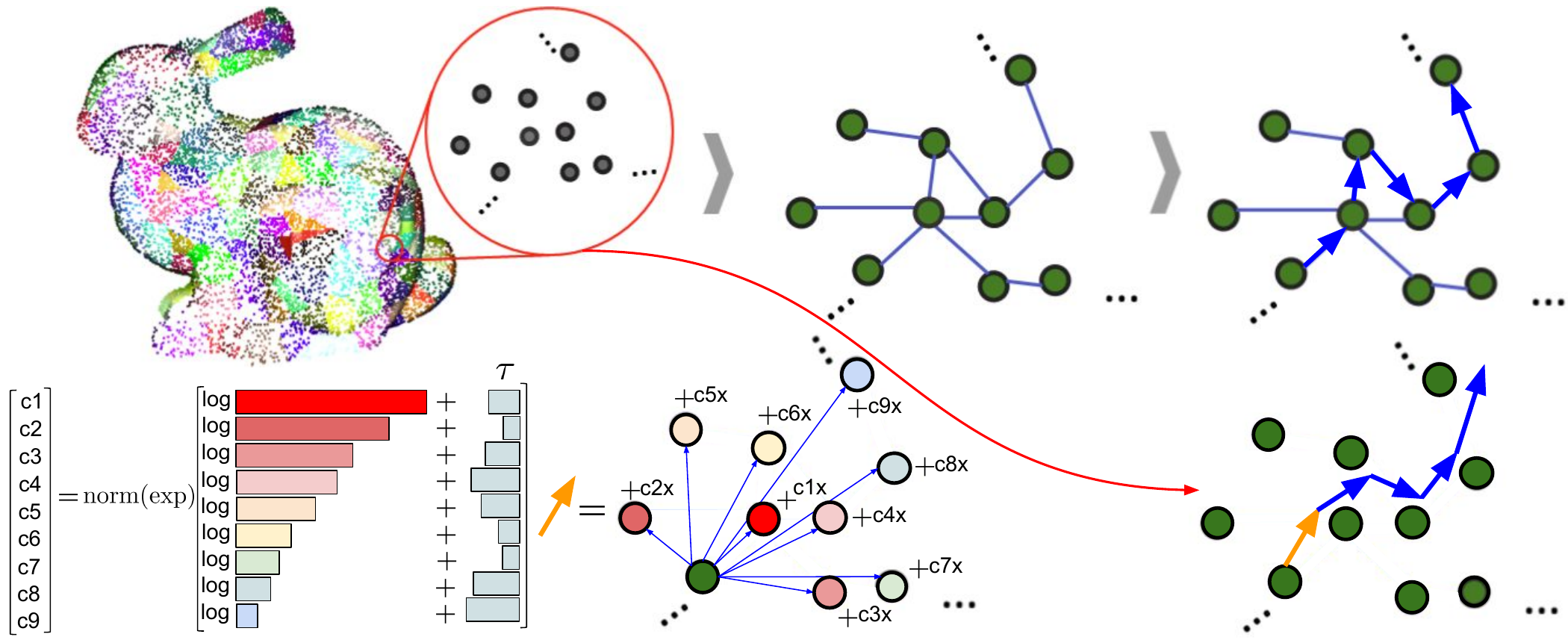}
    \vspace{-6mm}
    \caption{\small{Pictorial visualization of the SWING mechanism on the example of the i-graph obtained from the point cloud. The scheme above the red line shows an approach based on regular Graph Random Features: an explicit graph representation with nodes and weighted edges is constructed and used to conduct walks (here depicted via blue arrows) on the nodes of the constructed graph. SWING bypasses this step by conducting those random walks in the continuous $d$-dimensional space, where i-graph's nodes are embedded (see: scheme to which red arrow is pointing). Each transition in such a walk (see: orange arrow) is obtained as a convex sum of the vectors joining current location of the walker with the locations of the nodes of the graph. The coefficients of that convex sum are proportional to the exponentiated sums of the weight-logarithms and Gumbel variables. That turns out to provide differentiable relaxation of the discrete random walk.}}
    \vspace{-5mm}
    \label{fig:swing-base}
\end{figure*}

Graph data provides several advantages over more standard modalities, but this comes with the cost. It is well illustrated by kernel methods applied for graphs. Similarity functions (kernels) between nodes in the graphs or the graphs themselves \citep{kondor} are much harder to compute than regular kernels (e.g. Gaussian) in Euclidean spaces. In particular, computing kernel-matrices for several classes of well-established graph kernels, e.g. diffusion kernels between graph-nodes or random-walk kernels between graphs, takes time $O(N^{3})$ or even $O(N^{6})$ (if the similarity measure is lifted from the graph's nodes-space to graphs' space) \citep{gkernels-1, rw-1}, where $N$ is the number of vertices. Thus in practice applying those techniques is constrained to relatively small graphs of a few hundred nodes. 

To address this challenge, a class of \textit{Graph Random Features} methods (or GRFs) was recently proposed \citep{grf-1, grf-2}. GRFs provide an unbiased factorization or the graph kernel matrices into sparse matrices by considering random walks on the input graphs. Those sparse matrices can be in turn mapped into low-rank (dense) matrices, still providing unbiased approximation. Consequently, GRFs can significantly improve computational efficiency of the graph kernel methods and scale up their applicability to graphs of tens of thousands of nodes and larger, by leveraging efficient calculations with sparse and dense low-rank matrices. GRFs were originally designed for kernels between graphs' nodes, but were also extended to kernels between graphs \citep{grf-4}. In a related graph-level setting, other randomized feature constructions were also explored \citep{zambon2020graph}.

GRFs were already successfully used in a large volume of applications, such as: clustering algorithms with graph kernels \citep{grf-1}, differential equations on graphs, node attribute prediction (such as vertex-normals on meshes) \citep{grf-2}, 3D-Transformers \citep{grfsatt} and very recently: Gassian processes on graphs \citep{grf-3} and graph classification \citep{grfsplusplus}.

Despite their universality, GRFs come with one major limitation. They require explicit graph construction for random walks. This is problematic since combinatorial operations such as walks on graphs are not well-supported by modern accelerators. Furthermore, in several ML applications input graphs of interest are not even provided explicitly, but rather \textit{implicitly} (from no on, we will refer to them as \textit{i-graphs}). The weights of edges $w_{i,j}$ of those graphs are defined as follows:
\begin{equation}\label{eq: weights as bivariate function}
w_{i,j} = f(v(i)-v(j))    
\end{equation}
for some function $f:\mathbb{R}^{d} \rightarrow \mathbb{R}$ and mapping $v:\mathrm{V} \rightarrow \mathbb{R}^{d}$ from the set of vertices to their corresponding feature vectors. The classic example constitue \textit{$\epsilon$-neighborhood} graphs, for which $f$ is given as: $f(\mathbf{x})=\mathbb{I}(\|\mathbf{x}\| \leq \epsilon)$ and often $d=3$ (examples include point cloud modeling, where vectors $v(i)$ correspond to 3D-coordinates of points). Here $\mathbb{I}$ stands for the indicator function.

Thus, even if explicit representations of such graphs (that GRFs operate on) are of quadratic complexity in the number of vertices $N$, those graphs by definition provide computationally-efficient implicit representations. They are not leveraged by GRFs though.

In this paper, we propose SWING: \textbf{S}pace \textbf{W}alks for \textbf{I}mplicit \textbf{N}etwork \textbf{G}raphs, a new class of algorithms for computations involving Graph Random Features \citep{grf-1} on i-graphs. To be more, specific, SWING supports linear-time approximate multiplications with kernel matrices induced by GRFs, and \textbf{without graph materialization}. Rather than conducting walks on graphs' nodes, those methods rely on walks in continuous spaces, in which those graphs are embedded. To accurately and efficiently approximate original combinatorial calculations, SWING applies customized Gumbel-softmax sampling mechanism with linearized softmax-kernel (see: Fig. \ref{fig:swing-base}), obtained via random features coupled with importance sampling techniques. This algorithm is of its own interest. SWING relies on the deep connection between implicitly defined graphs and Fourier analysis, presented in this paper. SWING is accelerator-friendly and does not require input graph materialization. 
Its time complexity is $O(N)$ for i-graphs, even if they have $\Theta(N^{2})$ edges. We provide detailed analysis of SWING and complement it with thorough experiments on different classes of i-graphs. 
\begin{algorithm}[ht]
\caption{\textbf{\textcolor{violet}{Regular GRFs (for weighted graphs):}} Construct vectors $\xi_{\rho}(i) \in \mathbb{R}^N$ to approximate $\mathbf{K}_{\boldsymbol{\alpha}}(\mathbf{W})$}\label{alg:constructing_rfs_for_k_alpha}
\textbf{Input:} weighted adjacency matrix $\mathbf{W} \in \mathbb{R}^{N \times N}$, vector of unweighted node degrees (number of out-neighbours) $\mathrm{deg} \in \mathbb{R}^N$, modulation function $\rho:(\mathbb{N} \cup \{0\})  \to \mathbb{R}$, termination probability $p_\textrm{halt} \in (0,1)$, node $i \in \mathcal{N}$, number of random walks to sample $m \in \mathbb{N}$.

\textbf{Output:} signature vector $\xi_\rho(i) \in \mathbb{R}^N$. \\
\vspace{-4mm}
\begin{algorithmic}[1]
\STATE initialize: $\xi_\rho(i) \leftarrow \boldsymbol{0}$
\FOR{$w = 1, ..., m$}
\STATE initialize: \lstinline{load} $\leftarrow 1$, \lstinline{current_node} $\leftarrow i$, \lstinline{terminated} $\leftarrow \textrm{False}$, \lstinline{walk_length} $\leftarrow 0$ 
\WHILE{ \lstinline{terminated} $= \textrm{False}$ }
\STATE $\xi_\rho(i)$[\lstinline{current_node}] $\leftarrow$ $\xi_\rho(i)[\texttt{current\_node}]+\texttt{load}\times \rho\left(\right.$\lstinline{walk_length}$\left.\right)$
\STATE \lstinline{walk_length} $\leftarrow$ $\texttt{walk\_length} + 1$
\STATE \lstinline{new_node} $\leftarrow \textrm{Cat} \left[ \mathcal{N}(\right.$\lstinline{current_node}$\left.)\right]$ \AlgComment{\textcolor{blue}{assign to one of neighbours, with probabilities $\sim$ to edge weights}}
\STATE \lstinline{load} $\leftarrow$ $\texttt{load}\times \frac{\textrm{deg}\scriptsize{[\lstinline{current_node}]}}{1-p_\textrm{halt}} $ \AlgComment{\textcolor{blue}{update load; $\mathrm{deg}(v) \overset{\mathrm{def}}{=} \sum_{z \in \mathcal{N}(v)} \mathbf{W}(v,z)$}}
\STATE $\lstinline{current_node} \leftarrow \lstinline{new_node}$
\STATE \lstinline{terminated} $\leftarrow \left( t \sim \textrm{Unif}(0,1) < 
p_\textrm{halt}\right)$ \AlgComment{\textcolor{blue}{draw RV $t$ to decide on termination}}
\ENDWHILE
\ENDFOR
\STATE normalize: $\xi_\rho(i) \leftarrow \xi_\rho(i)/m$
\end{algorithmic}
\end{algorithm}
\section{The Algorithm}
\label{sec:algorithm}
\subsection{Preliminaries: Graph Random Features}
\label{sec:grfs}
SWING provides a relaxation of the original combinatorial computational mechanism of Graph Random Features (GRFs). Thus we introduce GRFs first. 

We will consider weighted undirected N-vertex graphs $\mathrm{G}(\mathrm{V},\mathrm{E}, \mathbf{W}=[w_{i,j}]_{i,j \in \mathrm{V}})$, where (1) $\mathrm{V}$ is a set of vertices, (2) $\mathrm{E} \subseteq \mathrm{V} \times \mathrm{V}$ is a set of undirected edges ($(i, j) \in \mathrm{E}$ indicates that there is an edge between $i$ and $j$ in $\mathrm{G}$), and (3) $\mathbf{W} \in \mathbb{R}_{\geq 0}^{N \times N}$ is a weighted adjacency matrix (no edge results in the zero weight). 

We consider the following kernel matrix $\mathbf{K}_{\boldsymbol{\alpha}}(\mathbf{W})\in \mathbb{R}^{N \times N}$, where $\boldsymbol{\alpha}=(\alpha_{k})_{k=0}^{\infty}$ and $\alpha_{k} \in \mathbb{R}$:
\begin{equation}
\label{eq:main}
\mathbf{K}_{\boldsymbol{\alpha}}(\mathbf{W}) = \sum_{k=0}^{\infty} \alpha_{k} \mathbf{W}^{k}.    
\end{equation}
For bounded $(\alpha_{k})_{k=0}^{\infty}$ and $\|\mathbf{W}\|_{\infty}$ small enough, the above sum converges.
The matrix $\mathbf{K}_{\boldsymbol{\alpha}}(\mathbf{W})$ defines a kernel on the nodes of the graph. Interestingly, Eq.~\ref{eq:main} covers as special cases several classes of graph kernels, in particular graph diffusion/heat kernels.

GRFs enable one to rewrite $\mathbf{K}_{\boldsymbol{\alpha}}(\mathbf{W})$ (in expectation) as
$\mathbf{K}_{\boldsymbol{\alpha}}(\mathbf{W}) \overset{\mathbb{E}}{=} \mathbf{K}_{1}\mathbf{K}_{2}^{\top}$,
for independently sampled $\mathbf{K}_{1},\mathbf{K}_{2} \in \mathbb{R}^{N \times d}$ and some $d \leq N$.
This factorization provides efficient (sub-quadratic) and unbiased approximation of the matrix-vector products $\mathbf{K}_{\boldsymbol{\alpha}}(\mathbf{W})\mathbf{x}$ as $\mathbf{K}_{1}(\mathbf{K}_{2}^{\top}\mathbf{x})$, if $\mathbf{K}_{1},\mathbf{K}_{2}$ are sparse or $d=o(N)$.
This is often the case in practice.
However, if this does not hold, explicitly materializing $\mathbf{K}_1 \mathbf{K}_2^\top$ enables one to approximate $\mathbf{K}_{\boldsymbol{\alpha}}(\mathbf{W})$ in quadratic (versus cubic) time.
Below, we describe the base GRF method for constructing sparse $\mathbf{K}_{1}, \mathbf{K}_{2}$ for $d=N$. 
Extensions giving $d=o(N)$, using the Johnson-Lindenstrauss Transform \citep{jlt-main}, can be found in \citep{grf-1}.
Each $\mathbf{K}_{j}$ for $j \in \{1,2\}$ is obtained by row-wise stacking of the vectors $\xi_{\rho}(i) \in \mathbb{R}^{N}$ for $i \in \mathrm{V}$, where $\rho$ is the \textit{modulation function} $\rho:\mathbb{R} \rightarrow \mathbb{R}$, specific to the graph kernel being approximated. 
The procedure to construct random vectors $\xi_{\rho}(i)$ is given in Algorithm \ref{alg:constructing_rfs_for_k_alpha}. 
Intuitively, one samples an ensemble of RWs from each node $i \in V$. 
Every time a RW visits a node, the scalar value in that node is updated by the renormalized \textit{load}, a scalar stored by each walker, updated during each vertex-to-vertex transition. The renormalization is encoded by the modulation function.

After all the walks terminate, the vector $\xi_{\rho}(i)$ is obtained by the concatenation of all the scalars from the discrete scalar field, followed by a simple renormalization.
For unbiased estimation, $\rho:\mathbb{N} \rightarrow \mathbb{C}$ needs to satisfy $\sum_{p=0}^{k}\rho(k-p)\rho(p) = \alpha_{k}$, 
for $k=0,1,...$ (see Theorem 2.1 in \citep{grf-2}). We conclude that modulation function $\rho$ is obtained by de-convolving sequence $\boldsymbol{\alpha}=(\alpha_{k})_{k=0}^{\infty}$ defining graph kernel.

\textbf{Note:} We would like to emphasize that matrix $\mathbf{W}$ can be also a derivative of the regular weighted adjacency matrix, e.g. obtained by conducting degree-based renormalization, as in \citep{grf-2}.

\subsection{Space Walks on Implicit Network Graphs}
\label{sec:swing}

We are ready to present SWING algorithm. We assume that the vertices of the $N$-vertex graph are embedded in the $d$-dimensional space, with the corresponding feature vectors: $v(1)=\mathbf{p}_{1},...,v(N)=\mathbf{p}_{N} \in \mathbb{R}^{d}$.

\subsubsection{Replacing Node-Walks with $\mathbb{R}^{d}$-Walks}
\label{sec:walks}

Original GRF-method relies on the random walks on the nodes of the input graph. SWING relaxes this procedure with walks conducted in the $d$-dimensional space where i-graphs of interest are embedded. Each walk starts in the graph node (identified with its corresponding feature vector), as for GRFs. 

Let $\mathbf{x}^{t}(\mathbf{p}) \in \mathbb{R}^{d}$ be the location of the walker, that originates its walk in $\mathbf{p}$, after $t$ steps for $t=0,1,...,T$. We start with this rule of the $t \rightarrow t+1$ transition, where $\{\tau^{t}_{i}\}_{i=1,...,N}^{t=0,...,T}$ are chosen iid from Gumbel distribution.:
\begin{equation}
\label{eq:transit-relax}
\mathbf{x}^{t+1}(\mathbf{p}) = \sum_{i=1}^{N}
\frac{\exp(\log(f(\mathbf{p}_{i}-\mathbf{x}^{t}(\mathbf{p}))+\tau^{t}_{i})\sigma^{-2})\mathbf{p}_{i}}{\sum_{j=1}^{N} \exp(\log(f(\mathbf{p}_{j}-\mathbf{x}^{t}(\mathbf{p}))+\tau^{t}_{j})\sigma^{-2})},
\end{equation}
Note that the new location is defined as a convex sum of the feature vectors $\mathbf{p}_{i}$ with coefficients given by the probabilities defining Gumbel-softmax distribution with temperature $\sigma^{2}$. 
That convex sum is the differentiable relaxation of the following update rule:
\begin{equation}
\label{eq:argmax}
\mathbf{x}^{t+1}(\mathbf{p}) = \mathbf{p}_{\mathrm{argmax}_{i=1}^{N}[\log(f(\mathbf{p}_{i}-\mathbf{x}^{t}(\mathbf{p}))+\tau^{t}_{i})]}, 
\end{equation}
applying the so-called \textit{Gumbel-trick} \citep{gumbel}.
It is well known that a discrete distribution on the set $\{1,...,N\}$ given by the formula $\mathrm{index}=\mathrm{argmax}_{i=1}^{N}[\log(f(\mathbf{p}_{i}-\mathbf{x}^{t}(\mathbf{p}))+\tau^{t}_{i})]$ is a categorical distribution with the corresponding probabilities $(p_{1},...,p_{N})$ satisfying the following: $p_{i} \sim f(\mathbf{p}_{i}-\mathbf{x}^{t}(\mathbf{p}))$.

We conclude that Eq. \ref{eq:argmax} effectively describes a random walk on the nodes of the input graph with a probablity of transitioning from $i$ to $j$ proportional to the weight $w_{i,j}=f(v(i)-v(j))$. This is exactly GRF algorithm. Equation \ref{eq:transit-relax} constitutes a convenient relaxation, where random walks explore entire $\mathbb{R}^{d}$ rather than just a discrete space of graph's nodes (with the approximation of Eq. \ref{eq:argmax} more accurate for smaller temperatures $\sigma^{2}>0$). However it is not efficient to compute since it requires $O(N^{2}Tm)$ computational time for $Nm$ conducted walks (where $m$ stands for the number of random walks per given starting vertex).
\subsubsection{Random Features to the Rescue}
To obtain an algorithm with the linear dependence on $N$, we apply \textit{random features} (RFs) \citep{ranf-1, ranf-2}. RFs is a method to linearize bi-variate functions $\mathrm{K}:\mathbb{R}^{d} \times \mathbb{R}^{d} \rightarrow \mathbb{R}$ (often kernels) via randomized $\phi_{1}, \phi_{2}: \mathbb{R}^{d} \rightarrow \mathbb{C}^{r}$ (usually $\phi_{1}=\phi_{2}$), as: 
$
\mathrm{K}(\mathbf{x},\mathbf{y}) \approx \phi_{1}^{\top}(\mathbf{x})\phi_{2}(\mathbf{y}),    
$ where: $r$ is the number of RFs.

Define $\mathrm{K}$ as: 
$\mathrm{K}(\mathbf{x},\mathbf{y}) = (f(\mathbf{x}-\mathbf{y}))^{\frac{1}{\sigma^{2}}}$. Let $\phi_{f,\sigma}$ be a mapping providing its corresponding RFs, i.e.
$(f(\mathbf{x}-\mathbf{y}))^{\frac{1}{\sigma^{2}}}=\mathbb{E}[\phi_{f,\sigma}^{\top}(\mathbf{x})\phi_{f,\sigma}(\mathbf{y})]$ (in Sec. \ref{sec:rfs} we will show how maps $\phi_{f,\sigma}$ can be constructed).
We can then write an approximate version of the transition Eq. \ref{eq:transit-relax}, as:
\begin{align}
\begin{split}
\mathbf{x}^{t+1}(\mathbf{p})=\sum_{i=1}^{N}
\frac{\mathbf{p}_{i}\cdot \phi^{\top}_{f,\sigma}(\mathbf{p}_{i})\phi_{f,\sigma}(\mathbf{x}^{t}(\mathbf{p}))\exp(\frac{\tau^{t}_{i}}{\sigma^{2}})}
{\sum_{j=1}^{N}\phi^{\top}_{f,\sigma}(\mathbf{p}_{j})\phi_{f,\sigma}(\mathbf{x}^{t}(\mathbf{p}))\exp(\frac{\tau^{t}_{j}}{\sigma^{2}})} = \\
\frac{\left(\sum_{j=1}^{N}\mathbf{p}_{j}\exp(\frac{\tau^{t}_{j}}{\sigma^{2}})\phi^{\top}_{f,\sigma}(\mathbf{p}_{j})\right)\phi_{f,\sigma}(\mathbf{x}^{t}(\mathbf{p}))}{\left(\sum_{j=1}^{N}\exp(\frac{\tau^{t}_{j}}{\sigma^{2}})\phi^{\top}_{f,\sigma}(\mathbf{p}_{j})\right)\phi_{f,\sigma}(\mathbf{x}^{t}(\mathbf{p}))} 
\label{eq:linearized-transition}
\end{split}
\end{align}
\paragraph{Exponentiated Gumbel distribution factorization:} We will make one more refinement, to further improve the quality of the approximation.
Denote by $\mathcal{F}_{\sigma^{2}}$ the distribution of $\exp(\frac{\tau^{2}}{\sigma^{2}})$, where $\tau$ is Gumbel. 
Let $P_a$ and $P_b$ be two distributions with probability density functions:
\begin{align}
\label{eq: pa}
    p_a(y) &= \sigma^2 y^{-1-2\sigma^2} \exp\left(-\frac{1}{2y^{2\sigma^2}}\right), \\
\label{eq: pb}
    p_b(y) &= \sqrt{\frac{2}{\pi}} \sigma^2 y^{-1-\sigma^2} \exp\left(-\frac{1}{2y^{2\sigma^2}}\right).
\end{align}
We have (proof in Appendix~\ref{appx: gumbel decomposition}, see: Proposition \ref{lemma:gumbel-factor}):
\begin{proposition}
If $a \sim P_a$ and $b \sim P_b$ are independent, then $a \cdot b \sim \mathcal{F}_{\sigma^2}$.
\vspace{-1mm}
\end{proposition}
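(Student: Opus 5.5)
The plan is to reduce everything to standard Gamma variables by changes of variables, and then use the Legendre duplication formula. Write $s=\sigma^{2}$. I read $\mathcal{F}_{\sigma^2}$ as the law of $\exp(\tau/\sigma^{2})$, which is what appears in Eq.~\ref{eq:linearized-transition}; the exponent $\tau^{2}$ in its definition looks like a typo.

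\textbf{Step 1 (target law).} For a standard Gumbel $\tau$, the variable $E=e^{-\tau}$ is $\mathrm{Exp}(1)$, since $P(e^{-\tau}>x)=P(\tau<-\log x)=e^{-x}$. Hence
\[
\exp(\tau/s)=E^{-1/s}=(E^{2})^{-1/(2s)} .
\]
So it suffices to show that $ab$ has the form $(\text{something distributed as }E^{2})^{-1/(2s)}$.

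\textbf{Step 2 (identify $a$ and $b$).}
\begin{itemize}
\item For $a$: set $V=\tfrac{1}{2}a^{-2s}$, so $|dV/da|=s\,a^{-2s-1}$. Dividing $p_a$ by this Jacobian leaves density $e^{-V}$ on $(0,\infty)$, so $V\sim\mathrm{Exp}(1)=\mathrm{Gamma}(1)$ and $a=(2V)^{-1/(2s)}$.
\item For $b$: set $W=\tfrac{1}{2}b^{-2s}$. The same Jacobian turns $p_b$ into $\sqrt{2/\pi}\,b^{s}e^{-W}$. Since $b^{s}=(2W)^{-1/2}$, this equals $\pi^{-1/2}W^{-1/2}e^{-W}$, so $W\sim\mathrm{Gamma}(1/2,1)$ and $b=(2W)^{-1/(2s)}$.
\end{itemize}
It follows that $ab=(4VW)^{-1/(2s)}$, with $V$ and $W$ independent.

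\textbf{Step 3 (key identity).} It remains to show that $2\sqrt{VW}\sim\mathrm{Exp}(1)$. This is the step I expect to be the main obstacle, and I would handle it with Mellin transforms (moments). For real $k>-1$, independence gives
\[
\mathbb{E}\big[(2\sqrt{VW})^{k}\big]=2^{k}\,\Gamma\!\left(1+\tfrac{k}{2}\right)\Gamma\!\left(\tfrac12+\tfrac{k}{2}\right)/\Gamma\!\left(\tfrac12\right).
\]
The Legendre duplication formula $\Gamma(z)\Gamma(z+\tfrac12)=2^{1-2z}\sqrt{\pi}\,\Gamma(2z)$, applied with $z=\tfrac{1+k}{2}$, turns the right-hand side into $\Gamma(1+k)=\mathbb{E}[E^{k}]$.

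The identity holds for complex $k$ in the strip $\mathrm{Re}\,k>-1$, in particular on the line $k=it$. There it says that the characteristic functions of $\log(2\sqrt{VW})$ and $\log E$ coincide, so the two positive variables are equal in law.

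\textbf{Step 4 (conclude).} By Step 3, $4VW\overset{d}{=}E^{2}$. Therefore
\[
ab=(4VW)^{-1/(2s)}\overset{d}{=}E^{-1/s}\overset{d}{=}\exp(\tau/\sigma^{2}),
\]
so $a\cdot b\sim\mathcal{F}_{\sigma^{2}}$. The main care needed along the way is in justifying that the Mellin transform determines the law, which the characteristic-function argument in Step 3 settles.
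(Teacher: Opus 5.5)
Your proof is correct and is essentially the paper's argument. Both rest on the multiplicativity of the Mellin transform for independent products and on the Legendre duplication formula splitting $\Gamma(1+k)$ into $\Gamma(1+k/2)$ and $\Gamma(1/2+k/2)$. The paper runs it top-down: it factors the Fr\'echet transform, then reads off $a$ as $2^{-1/(2\sigma^2)}\phi^{1/2}$ and $b$ as a power of a half-normal. You run it bottom-up: you identify $a=(2V)^{-1/(2\sigma^2)}$ and $b=(2W)^{-1/(2\sigma^2)}$ with $V\sim\mathrm{Gamma}(1)$, $W\sim\mathrm{Gamma}(1/2)$, then verify the product.

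Your characteristic-function argument on the line $k=it$ supplies the Mellin-uniqueness step the paper leaves implicit. Your $b$ has the same law as $\eta^{-1/\sigma^2}$ for a standard half-normal $\eta$, and that is the representation matching $p_b$. The extra prefactor $2^{-1/(2\sigma^2)}$ the paper attaches to $\eta^{-1/\sigma^2}$ is a slip.
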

Thus we will replace variables $\exp(\frac{\tau_{i}^{t}}{\sigma^{2}})$ in Eq. \ref{eq:linearized-transition} with the products: $\alpha(\mathbf{p}_{i})\beta(\mathbf{x}^{t}(\mathbf{p}))$, with $\alpha(*)$, $\beta(*)$ sampled iid from the distributions with density functions $p_{a}$ and $p_{b}$ respectively. We can now summarize the walk step.

Define: $A_{f,\sigma}=\sum_{j=1}^{N}\mathbf{p}_{j}\alpha(\mathbf{p}_{j})\phi^{\top}_{f,\sigma}(\mathbf{p}_{j}) \in \mathbb{R}^{d \times r}$ and $B_{f,\sigma}=\sum_{j=1}^{N}\alpha(\mathbf{p}_{j})\phi^{\top}_{f,\sigma}(\mathbf{p}_{j}) \in \mathbb{R}^{r}$. Note that $A_{f,\sigma}$ and $B_{f,\sigma}$ are of size \textbf{independent} from $N$. Furthermore, they do not depend on the random walks conducted. We conclude that both can be pre-computed in time linear in $N$ and then used in the computation of $\mathbf{x}^{t+1}(\mathbf{p})$, as follows: $\mathbf{x}^{t+1}(\mathbf{p}) = \frac{A_{f,\sigma}\phi_{f,\sigma}(\mathbf{x}^{t}(\mathbf{p}))\beta(\mathbf{x}^{t}(\mathbf{p}))}{B_{f,\sigma}\phi_{f,\sigma}(\mathbf{x}^{t}(\mathbf{p}))\beta(\mathbf{x}^{t}(\mathbf{p}))}$. This can be done in time $O_{N}(1)$. Thus the computation of all walks can be done in time $O(NTm)$, linear in $N$.
\subsubsection{Relaxed Load Computation}\label{sec:load distribution}
The next step towards SWING algorithm is the relaxation of the load update rule from Algorithm 1:
\begin{equation}
 \mathrm{load} \leftarrow \mathrm{load} \times 
 \frac{\mathrm{deg}(i)}{1-p_{\mathrm{halt}}},
\end{equation}
where $i$ is the current node, $j$ is the next node, $\mathbf{W}[i,j]=w_{i,j}=f(v(i)-v(j))$ and $\mathrm{deg}(i)=\sum_{j=1}^{N} w_{i,j}$. 

As in the previous section, we will relax this update rule, using mapping $\phi_{f,\sigma}$. Denote by $l^{t}(\mathbf{p})$ the load after $t$ steps for the walk originating in $p$. We obtain:
\begin{align}
\begin{split}
 l^{t+1}(\mathbf{p})= \frac{1}{1-p_{\mathrm{halt}}}l^{t}(\mathbf{p})  
 \left(\sum_{i=1}^{N}\phi^{\top}_{f,\sigma}(\mathbf{p}_{i})\right)\phi_{f,\sigma}(\mathbf{x}^{t}(\mathbf{p}))
\end{split} 
\end{align}
Thus the load update can be conducted in time $O_{N}(1)$, at the cost of the one-time precomputation of the vector $C_{f,\sigma}=\sum_{i=1}^{N}\phi^{\top}_{f,\sigma}(\mathbf{p}_{i}) \in \mathbb{R}^{r}$, of linear in $N$ time complexity. Thus all the load updates throughout the execution of all the walks can be done in time $O(NTm)$.

\subsubsection{The Actions of the Kernel Matrix}
\label{sec:action}
Let us denote: $u^{t}(\mathbf{p}) = l^{t}(\mathbf{p})\rho(t)$, where $\rho$ is the modulation function from Algorithm 1. Scalar $l^{t}(\mathbf{p})$ should be thought of as a relaxed version of the load after $t$ steps for the walk that originates in the vertex corresponding to vector $\mathbf{p}$. Thus $u^{t}(\mathbf{p})$ is a relaxed version of the additive term from l.5 of Algorithm 1, describing how the value of the entry of the signature vector $\xi_{\rho}$, corresponding to vertex $j$ that the walk visits, changes.

In our relaxed version, no vertex is visited, but instead the walk evolves between the vertices. Thus, instead of depositing $u^{t}(\mathbf{p})$ entirely in one of the vertices, we will instead "spread it" across all the vertices. The amount deposited in vertex $k$ corresponding to $\mathbf{p}_{k}$ will be proportional to $g(\mathbf{x}^{t}(\mathbf{p})-\mathbf{p}_{k})$ for some function: $g:\mathbb{R}^{d} \rightarrow \mathbb{R}$. As before, rather than working directly with $g$, we will instead work with the linearized versions of the function $\mathrm{J}(\mathbf{x},\mathbf{y})=(g(\mathbf{x}-\mathbf{y}))^{\frac{1}{\sigma^2}}$, with the corresponding RF-map $\psi_{g}$. That means that our signature vectors $\xi_{\rho}$ are dense, as opposed to sparse $\xi_{\rho}$ for the original GRFs, a clear computational challenge. 

Instead of explicitly storing and updating $N$ dense $N$-dimensional signature vectors, requiring $\Omega(N^2)$ space and time complexity, we will apply them implicitly, showing an efficient algorithm for computing the action of the kernel matrix $\mathbf{K} = [\xi^{\top}_{\rho}(i)\xi_{\rho}(j)]_{i,j=1,...,N} \in \mathbb{R}^{N \times N}$ on a given vector $\mathbf{v} \in \mathbb{R}^{N}$. As in the case of regular GRFs (see: Sec. \ref{sec:grfs}), we will leverage the decomposition: $\mathbf{K} = \mathbf{K}_{1}\mathbf{K}^{\top}_{2}$, given by signature vectors. We will show a linear time complexity algorithm (in $N$) for computing $\mathbf{K}_{1}\mathbf{v}$ and $\mathbf{K}^{\top}_{2}\mathbf{v}$ for any $\mathbf{v} \in \mathbb{R}^{N}$.

It suffices to show that the following two expressions: $\sum_{k=1}^{N}\xi_{\rho}(i)[k]v_{k}$ (for a fixed $i \in \{1,...,N\}$) and 
$\sum_{i=1}^{N}\xi_{\rho}(i)[k]v_{k}$ (for a fixed $k \in \{1,...,N\}$) can be computed in $O(1)$ time.
The following holds:
\begin{align}
\begin{split}
\sum_{k=1}^{N}\xi_{\rho}(i)[k]v_{k} = \sum_{t=1}^{T} \sum_{k=1}^{N} u^{t}(\mathbf{p}_{i})
\frac{\psi_{g}^{\top}(\mathbf{x}^{t}(\mathbf{p}_{i}))\psi_{g}(\mathbf{p}_{k})v_{k}}
{\sum_{j=1}^{N} \psi_{g}^{\top}(\mathbf{x}^{t}(\mathbf{p}_{i}))\psi_{g}(\mathbf{p}_{j})}\\
=\sum_{t=1}^{T} \frac{u^{t}(\mathbf{p}_{i})\psi_{g}^{\top}(\mathbf{x}^{t}(\mathbf{p}_{i}))}{\psi_{g}^{\top}(\mathbf{x}^{t}(\mathbf{p}_{i}))\sum_{j=1}^{N}\psi_{g}(\mathbf{p}_{j})}\sum_{k=1}^{N} 
\psi_{g}(\mathbf{p}_{k})v_{k}\\
\sum_{i=1}^{N}\xi_{\rho}(i)[k]v_{k} = \sum_{t=1}^{T} \sum_{i=1}^{N} u^{t}(\mathbf{p}_{i})
\frac{\psi_{g}^{\top}(\mathbf{x}^{t}(\mathbf{p}_{i}))\psi_{g}(\mathbf{p}_{k})v_{k}}
{\sum_{j=1}^{N} \psi_{g}^{\top}(\mathbf{x}^{t}(\mathbf{p}_{i}))\psi_{g}(\mathbf{p}_{j})}\\
= \sum_{t=1}^{T} \sum_{i=1}^{N} \left(u^{t}(\mathbf{p}_{i})
\frac{\psi_{g}^{\top}(\mathbf{x}^{t}(\mathbf{p}_{i}))}
{\psi_{g}^{\top}(\mathbf{x}^{t}(\mathbf{p}_{i}))\sum_{j=1}^{N} \psi_{g}(\mathbf{p}_{j})}\right)\psi_{g}(\mathbf{p}_{k})v_{k}
\end{split}    
\end{align}
We see that both expressions can be clearly calculated in time $O(NT)$, at the cost of the one-time pre-computations of linear in $N$ time complexity. To be more specific,
for the former expresion: $\sum_{k=1}^{N}\xi_{\rho}(i)[k]v_{k}$, one needs to pre-compute $\sum_{k=1}^{N} 
\psi_{g}(\mathbf{p}_{k})v_{k}$ and $C=\sum_{k=1}^{N} 
\psi_{g}(\mathbf{p}_{k})$, both in time linear in $N$. Then, for any given $i$, one needs to compute $\sum_{t=1}^{T} \frac{u^{t}(\mathbf{p}_{i})\psi_{g}^{\top}(\mathbf{x}^{t}(\mathbf{p}_{i}))}{\psi_{g}^{\top}(\mathbf{x}^{t}(\mathbf{p}_{i}))C}$, in time $O_{N}(1)$.
On the other hand, for the latter expression, one needs to pre-compute 
$D=\sum_{t=1}^{T}\sum_{i=1}^{N}\left(u^{t}(\mathbf{p}_{i})
\frac{\psi_{g}^{\top}(\mathbf{x}^{t}(\mathbf{p}_{i}))}
{\psi_{g}^{\top}(\mathbf{x}^{t}(\mathbf{p}_{i}))C}\right)$
in time $O(NT)$. Then, for any given $k$, it suffices to compute $D\psi_{g}(\mathbf{p}_{k})v_{k}$, in time $O_{N}(1)$.

Thus, by running $m$ random walks per node, as in Sec. \ref{sec:grfs} and averaging over $m$ obtained signature vectors (as in l.13 of Algorithm 1), we obtain SWING algorithm for computing the action of the kernel matrix on an arbitrary $\mathbf{v} \in \mathbb{R}^{N}$ of time complexity $O(NTm)$. 
\subsection{How to compute maps $\phi_{f}$ and $\psi_{g}$ ?}
\label{sec:rfs}
Note that in order to construct $\phi_{f}, \psi_{g}:\mathbb{R}^{d} \rightarrow \mathbb{C}^{m}$, one needs to be able to re-write function $h$ ($h \in \{f,g\}$) as:
\begin{equation}
h(\mathbf{x}-\mathbf{y}) \approx \eta_{1}^{\top}(\mathbf{x})\eta_{2}(\mathbf{y})    
\end{equation}
for some randomized $\eta_{1},\eta_{2}$ (effectively playing a role of $\phi$ or $\psi$). In principle, we would like $\eta_{1}=\eta_{2}$, but it is easy to see that for the asymmetric setting  $\eta_{1} \neq \eta_{2}$ our previous analysis remains valid, with only cosmetic changes needed. Thus, without loss of generality, we will consider two: $\eta_{1}$, $\eta_{2}$. We will show  a systematic method to construcy $\eta_{1},\eta_{2}$. Note that we have:
\begin{equation}
h(\mathbf{z})=\int_{\mathbb{R}^{d}} \exp(-2\pi k \omega^{\top}\mathbf{z})\tau(\omega)d\omega,    
\end{equation}
where $\tau$ is the inverse Fourier Transform of $h$:
\begin{equation}
\tau(\omega) = \int_{\mathbb{R}^{d}} \exp(2 \pi k \mathbf{x}^{\top}\omega)h(\mathbf{x}) d\mathbf{x} 
\end{equation}
($k^{2}=-1$). Thus, taking: $\mathbf{z}=\mathbf{x}-\mathbf{y}$, we can rewrite:
\begin{equation}
h(\mathbf{x}-\mathbf{y})=C \cdot \mathbb{E}_{p(\omega)}
[\exp(-2 \pi k \omega^{\top} \mathbf{x})
\exp(2 \pi k \omega^{\top} \mathbf{y})],    
\end{equation}
where $p(\omega)$ is the probabilistic distribution with density proportional to $\tau(\omega)$ and $C=\int_{\mathbb{R}^{d}} \tau(\omega) d\omega$ (here we assume that the latter integral is well-defined).
Thus, if we can efficiently (potentially approximately) sample from $p(\omega)$, then we can leverage the following:
\begin{equation}
h(\mathbf{x}-\mathbf{y}) \overset{\mathbb{E}}{=} \eta^{\top}_{1}(\mathbf{x})(\eta_{2}(\mathbf{y})),    
\end{equation}
where for $\omega_{1},...,\omega_{m} \sim p(\omega)$ and for some $r \in \mathbb{N}$ (number of random features) the following holds:
\begin{align}
\begin{split}
\eta_{1}(\mathbf{v}) = \sqrt{\frac{C}{r}}
\left(\exp(-2 \pi k \omega_{1}^{\top}\mathbf{v}),...,\exp(-2 \pi k \omega_{r}^{\top}\mathbf{v})\right), \\
\eta_{2}(\mathbf{v}) = \sqrt{\frac{C}{r}}
\left(\exp(2 \pi k \omega_{1}^{\top}\mathbf{v}),...,\exp(2 \pi k \omega_{r}^{\top}\mathbf{v})\right).
\end{split}
\end{align}

\subsubsection{Radial Basis Functions}
\label{sec:gaussian_kernel}
The above technique applies also for a prominent class of radial basis (Gaussian kernel) functions $h$ defined as: $h(\mathbf{z})=\exp(-\frac{\|\mathbf{z}\|_{2}^{2}}{2\sigma^{2}})$ for some $\sigma>0$.
For them there exist particularly efficient ways of constructing $\eta$-maps, in particular the so-called \textit{positive random features} \citep{performers}.
For these functions $h$, we have the following, where $\omega_{1},...,\omega_{r} \sim \mathcal{N}(0,\mathbf{I}_{d})$:
\begin{align}
\begin{split}
h(\mathbf{x}-\mathbf{y}) \overset{\mathbb{E}}{=} \eta_{+}(\mathbf{x})^{\top}\eta_{+}(\mathbf{y}),\\
\eta_{+}(\mathbf{u}) \overset{\mathrm{def}}{=}
\frac{1}{\sqrt{r}} \exp(-\frac{\|\mathbf{u}\|_{2}^{2}}{\sigma^{2}})
\left(\exp(\omega_{1}^{\top}\mathbf{\frac{u}{\sigma}}),...,
\exp(\omega_{r}^{\top}\mathbf{\frac{u}{\sigma}})\right)
\end{split}
\end{align}
Additional techniques can be applied to reduce the variance of that $\eta$-induced approximation of $h(\mathbf{x}-\mathbf{y})$. A standard approach is to apply block-orthogonal or simplex ensembles $\{\omega_{1},...,\omega_{r}\}$ rather than consisting of iid sampled Gaussian vectors \citep{orfs, srfs}. Another option is importance sampling, which we found to be particularly effective. Further details are provided in Appendix~\ref{app: importance sampling}.

\subsubsection{Step Functions and $\epsilon$-neighborhoods}
\label{sec:step_function}

Let us get back to the step functions considered in the introduction (Sec. \ref{sec:intro}) and given as: $h(\mathbf{z}) = \mathbb{I}(\|\mathbf{z}\| \leq \epsilon)$. If $f$ takes this form, then we effectively obtain an $\epsilon$-neighborhood graph. Depending on the definition of the norm $\|\|$, we obtain different formulae on the inverse Fourier Transform $\tau$, but interestingly, often of the compact form (to apply our previous Fourier analysis). For instance, for the $L_{1}$-norm, we have:
\vspace{-1mm}
\begin{equation}
\tau(\mathbf{v}) = \prod_{i=1}^{d} \frac{\sin(2\epsilon v_{i})}{v_{i}}.   
\vspace{-1mm}
\end{equation}
On the other hand, for the $L_{2}$-norm $\|\|_{2}$, $\tau$ becomes the so-called d-th order \textit{Bessel function} \citep{Grafakos}.

\subsection{Putting This Altogether}
\label{sec:final-SWING}
SWING algorithm provides an efficient way of computing an approximate action of the graph kernel matrix on a given vector. As in the regular GRF setting, it applies random walks, but in contrast to regular GRFs, those walks evolve in $\mathbb{R}^{d}$ rather than on the nodes of the graph. 
An efficient implementation of these walks is provided in Sec. \ref{sec:walks}. As those walks evolve, SWING updates the \textrm{load} scalar. An efficient implementation of those updates is provided in Sec. \ref{sec:load distribution}. The final algorithm, leveraging those loads across all steps of the walk is provided in Sec. \ref{sec:action}. Algorithm~\ref{alg:swing} summarizes the resulting SWING procedure.

For simplicity, we assume that all the walks have the same length $T$ that is sampled from the Bernoulli distribution with failure probability $p_{\mathrm{halt}}$. Since $0 < p_{\mathrm{halt}}<1$ is a graph-independent constant, an average walk length $\mathbb{E}[T]=\frac{1-p_{\mathrm{halt}}}{p_{\mathrm{halt}}}$ is also a constant.

\begin{algorithm}[ht]
\caption{\textbf{\textcolor{violet}{SWING:}} Approximate the action $\mathbf{K}_{\boldsymbol{\alpha}}(\mathbf{W})\mathbf{V}$ of the graph kernel on attribute matrix $\mathbf{V}$}
\label{alg:swing}
\textbf{Input:} point cloud $\mathbf{p}\in\mathbb{R}^{N\times d}$, attribute matrix $\mathbf{V}\in\mathbb{R}^{N\times d_v}$, modulation function $\rho$, RF maps $\phi_{f,\sigma}$ and $\psi_g$, halting probability $p_\mathrm{halt}\in(0,1)$, number of walks $m$.\\
\textbf{Output:} approximation $\widehat{\mathbf{Y}} \approx \mathbf{K}_{\boldsymbol{\alpha}}(\mathbf{W})\mathbf{V}$.\\
\vspace{-3mm}
\begin{algorithmic}[1]
\STATE initialize: $\widehat{\mathbf{Y}}\leftarrow \mathbf{0}\in\mathbb{R}^{N\times d_v}$
\STATE Sample $m$ walk lenghts: $\mathbf{T} \sim 1+\mathrm{Geom}(p_\mathrm{halt})$ 
\FOR{$n_{\text{steps}} \in \mathbf{T}$}
    \STATE sample $\omega$ and $\omega'$ for RF maps $\phi_{f,\sigma}^{(\omega)}$ and $\psi_g^{(\omega')}$
    \STATE $\mathbf{\Phi_p}\leftarrow \phi_{f,\sigma}^{(\omega)}(\mathbf{p})$ \AlgComment{\textcolor{blue}{precompute evolution RFs}}
    \STATE $\mathbf{\Psi_p}\leftarrow \psi_g^{(\omega')}(\mathbf{p})$ \AlgComment{\textcolor{blue}{precompute load RFs}}
    \STATE $\mathbf{L} \leftarrow \mathbf{\Psi^\top_p} \mathbf{V}$ \AlgComment{\textcolor{blue}{precompute KV for load}}
    \STATE $\mathbf{X}^0\leftarrow \mathbf{p}$ \AlgComment{\textcolor{blue}{initialize walkers}}
    \STATE $\widehat{\mathbf{Y}}_{\text{walk}}\leftarrow \mathbf{V} \cdot \rho(0)$ \AlgComment{\textcolor{blue}{initialize approximation}}
    \FOR{$t=1,\dots,n_{\text{steps}}-1$}
    \STATE $l_t \leftarrow \frac{\rho(t)}{(1-p_\mathrm{halt})^t}$ \AlgComment{\textcolor{blue}{Compute load}}
    \STATE $\mathbf{\Phi_X}\leftarrow \phi_{f,\sigma}^{(\omega)}(\mathbf{X^{t-1}})$ \AlgComment{\textcolor{blue}{RFs for evolution}}
    \STATE $\mathbf{X}^t \leftarrow \mathrm{Evolve}(\mathbf{\Phi_X}, \mathbf{\Phi_p},\mathbf{p})$ \AlgComment{\textcolor{blue}{sample Gumbel, normalize and evolve}}
    \STATE $\mathbf{\Psi_X}\leftarrow \psi_g^{(\omega')}(\mathbf{X^t})$ \AlgComment{\textcolor{blue}{RFs for load distribution}}
    \STATE $\mathbf{\Psi_X} \leftarrow \mathrm{Normalize}(\mathbf{\Psi_X}, \mathbf{\Psi_p})$ 
    \STATE $\widehat{\mathbf{Y}}_{\text{walk}} \leftarrow \widehat{\mathbf{Y}}_{\text{walk}} + l_t \mathbf{\Psi_X} \mathbf{L}$ \AlgComment{\textcolor{blue}{distribute load}}
    \ENDFOR
    \STATE  $\widehat{\mathbf{Y}} \leftarrow \widehat{\mathbf{Y}} +  \widehat{\mathbf{Y}}_{\text{walk}}$
\ENDFOR
\STATE $\widehat{\mathbf{Y}}\leftarrow \widehat{\mathbf{Y}}/m$ \AlgComment{\textcolor{blue}{normalize}}
\end{algorithmic}
\end{algorithm}

\subsection{Scope and Approximation Properties}
\label{sec:scope_approximation}
We conclude this section by discussing the scope of SWING and the approximation guarantees that motivate its use for implicit graphs.

SWING is intended for implicitly defined graphs, where edge weights are specified by a function of node features rather than by an explicitly materialized adjacency matrix. This setting covers common point-cloud constructions, including Gaussian-affinity and $k$-nearest-neighbor graphs, and allows the method to operate directly on dense tensors of node features with memory linear in the number of nodes.

The approximation error admits a standard random-feature analysis, following Claim 1 in \citet{ranf-1}. For bounded positive random features, the probability of a uniform kernel-approximation error larger than $\epsilon$ over all queried pairs can be bounded by an expression proportional to 
$\exp\!\left(-\frac{r}{d}\right)\left(\frac{\text{diam}(\mathcal{M})}{\epsilon}\right)^2$,
where $r$ is the number of random features, $d$ is the ambient dimension, and $\text{diam}(\mathcal{M})$ is the diameter of the domain containing the points. In particular, the number of random features required for a fixed accuracy scales with the feature dimension and domain size, but not directly with the number of graph nodes.

\begin{figure*}[!ht]
    \centering
    \includegraphics[width=0.95\linewidth]{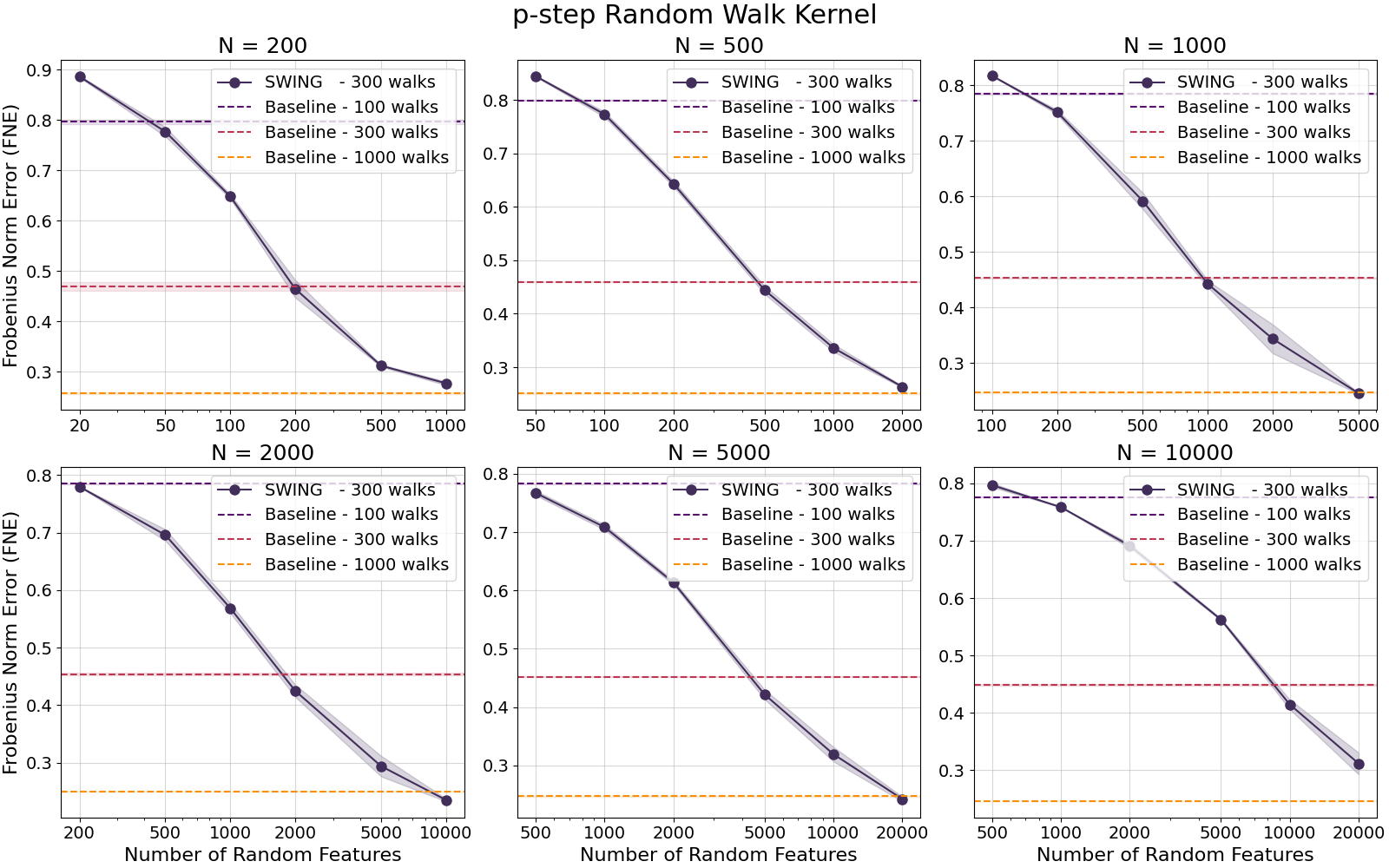}
    \caption{Frobenius norm error on synthetic point clouds of increasing size for the $p$-step random walk kernel.}
    \label{fig: synthetic p-step rw}
\end{figure*}

SWING also introduces approximation error through the Gumbel--Softmax relaxation and through accumulation along walks. Empirically, however, the former does not degrade performance; rather, as shown in Figures~\ref{fig: synthetic p-step rw},~\ref{fig: synthetic diffusion}, and~\ref{fig: synthetic d-reg lap}, there exists a threshold in the number of random features $r$ beyond which SWING outperforms GRFs without Gumbel--Softmax relaxation. We hypothesize that the continuous relaxation and load distribution allow SWING to explore multiple walks simultaneously at the cost of a single relaxed walk. For the latter source of error, $T$ is chosen as a small constant, as in prior GRF methods, making the resulting accuracy--efficiency trade-off effective for the implicit-graph regime considered here.
\section{Experiments}
\label{sec:experiments}
We evaluate the proposed method through a set of controlled synthetic and real-world experiments designed to assess both approximation accuracy and computational efficiency.

In Section~\ref{sec: synthetic exp}, we empirically show that SWING accurately approximates multiple graph kernels on synthetic 3D point clouds of increasing size, and that it consistently traces the Pareto-optimal error-time frontier when compared to standard GRF methods \citep{grf-1, grf-2}.

We then complement these results in Section~\ref{sec: downstream exp} with experiments on vertex normal prediction for 3D meshes from the Thingi10K dataset \citep{Thingi10K} and image classification using Vision Transformers on ImageNet \citep{deng2009imagenet} and Places365 \citep{places}. These results demonstrate that the proposed approach achieves performance comparable to or better than existing methods, while substantially reducing computational cost.

\subsection{Synthetic PCs experiment}\label{sec: synthetic exp}

\begin{figure*}[t]
    \centering
    \includegraphics[width=0.95\linewidth]{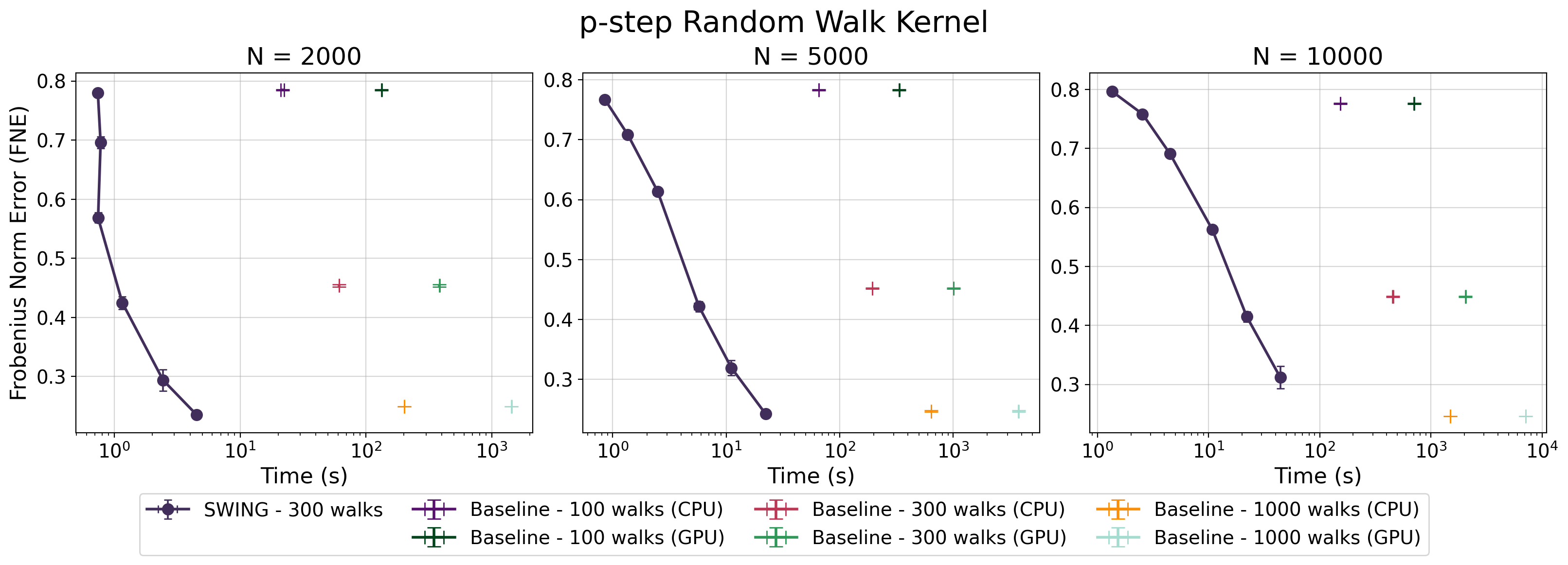}
    \caption{Trade-off between approximation error (FNE) and computation time for the $p$-step random walk kernel on synthetic point clouds of varying size.}
    \label{fig: time-FNE p-step rw}
\end{figure*}

\subsubsection{Empirical Kernel Approximation}

We first verify that the proposed method provides an unbiased estimate of several graph kernels  on a set of synthetic 3D point clouds.
In this setting, the ground-truth weight matrix $\W$ is constructed using a Gaussian kernel applied to the point coordinates.
Following the notation in Equation~\eqref{eq: weights as bivariate function}, the weights are defined as
\begin{equation}
    w_{i,j} = \exp\!\left(-\frac{\lVert \x_i - \x_j \rVert^2}{2}\right).
\end{equation}

Let $N$ denote the number of points in the point cloud.
Each point coordinate is sampled independently from the uniform distribution
\[
\mathcal{U}\!\left[-\sqrt[3]{\frac{\pi N}{6}}, \sqrt[3]{\frac{\pi N}{6}}\right]^3.
\]
This choice ensures a constant expected point density as $N$ increases, so that, up to boundary effects, a ball of unit radius centered at a point contains approximately one other point on average.\footnote{Neglecting boundary effects, the mean density over the support is $\frac{3}{4\pi}$. Since a ball of unit radius has volume $\frac{4\pi}{3}$, the expected number of points in such a ball is approximately one.}

We evaluate SWING on point clouds with varying numbers of points
$N \in \{200, 500, 1.000, 2.000, 5.000, 10.000\}$.
Across all experiments, we set $p_{\text{halt}} = 0.3$ and use $300$ random walks and use the gaussian kernel with positive random features both for the walk evolution part (Section \ref{sec:walks}) and the relaxed load distribution part (Section \ref{sec:load distribution}).
Graph kernel approximation quality is measured using the relative Frobenius Norm Error (FNE) between the true kernel $\K$ and its approximation $\hat{\K}$, defined as
\[
\text{FNE} = \frac{\lVert \K - \hat{\K} \rVert_F}{\lVert \K \rVert_F}.
\]

We compare SWING with the standard GRFs approach \citep{grf-1, grf-2}.
Figure~\ref{fig: synthetic p-step rw} reports the kernel approximation error as a function of the number of random features for the $p$-step random walk kernel. Results for the diffusion kernel and the regularized laplacina kernel can be found in Appendix \ref{app: synthetic different kernels} in Figures \ref{fig: synthetic diffusion} and \ref{fig: synthetic d-reg lap} respectively.

The approximation quality improves monotonically as the number of random features increases.
When both methods use the same number of random walks (i.e., 300), there is always a regime in which SWING achieves lower error than the baseline; this behavior is also observed for the other kernels reported in Appendix~\ref{app: synthetic different kernels}.

\subsubsection{Time Speedup of SWING}

Having established that the proposed method can accurately approximate different graph kernels, we now evaluate the computational speedup it provides. Using the same experimental setting as before, we report the FNE-time trade-off for SWING and for the standard GRF approach, both in its original CPU implementation and in a modified version enabling GPU computation.

Figure~\ref{fig: time-FNE p-step rw} shows that SWING traces the Pareto-optimal frontier: by varying the number of random features, the accuracy-time trade-off can be smoothly adjusted while remaining on the Pareto front. The improvement is substantial, with inference speedups exceeding one order of magnitude. The same behavior is observed for the other graph kernels reported in Appendix~\ref{app: synthetic different kernels} (Figures~\ref{fig: time-FNE diffusion} and~\ref{fig: time-FNE d-reg lap}).

\subsection{Downstream Experiments}\label{sec: downstream exp}
In this subsections, we showcase the utility of SWING on various downstream tasks, including vertex normal prediction and image classification. 

\textbf{Vertex Normal Prediction} We evaluate SWING on the task of normal vector prediction via mesh interpolation. Each vertex in a mesh 
$\mathrm{G}(\mathrm{V},\mathrm{E})$ is associated with spatial coordinates 
$\mathbf{x}_i \in \mathbb{R}^3$
 and a unit normal vector $\mathbf{F}_i \in \mathbb{R}^3$. Following~\citet{gr-1}, we randomly sample a subset 
$\mathrm{V}' \subset \mathrm{V}$, where 
$| \mathrm{V}|'= 0.8 |\mathrm{V}| $ and mask the corresponding vertex normals. The objective is to predict the normals of masked vertices 
$i \in \mathrm{V}'$
 according to $\mathbf{F}_i = \sum_{j \in \mathrm{V} \setminus \mathrm{V}'} \mathbf{K}(i, j)\mathbf{F}_j,$ where
$\mathbf{K}$ is the diffusion kernel. We report the average cosine similarity between predicted and ground-truth vertex normals across all vertices. SWING is evaluated on \textbf{40} meshes of 3D-printed objects of varying sizes from the Thingi10K dataset \citep{Thingi10K}. Additional experimental details are provided in Appendix ~\ref{sec:vert_norm_app}. Results for all meshes are reported in Table~\ref{tab:metrics_vertec_normal_all} (Appendix~\ref{sec:vert_norm_app}), with a subset of larger meshes shown in Figure ~\ref{fig:mesh_cosine}. SWING achives performance comparable to standard GRFs while exhibiting improved computational efficiency, even when run on CPUs~(Figure ~\ref{fig:mesh_time}).
\begin{figure}[t]
    \centering
    \includegraphics[width=\linewidth]{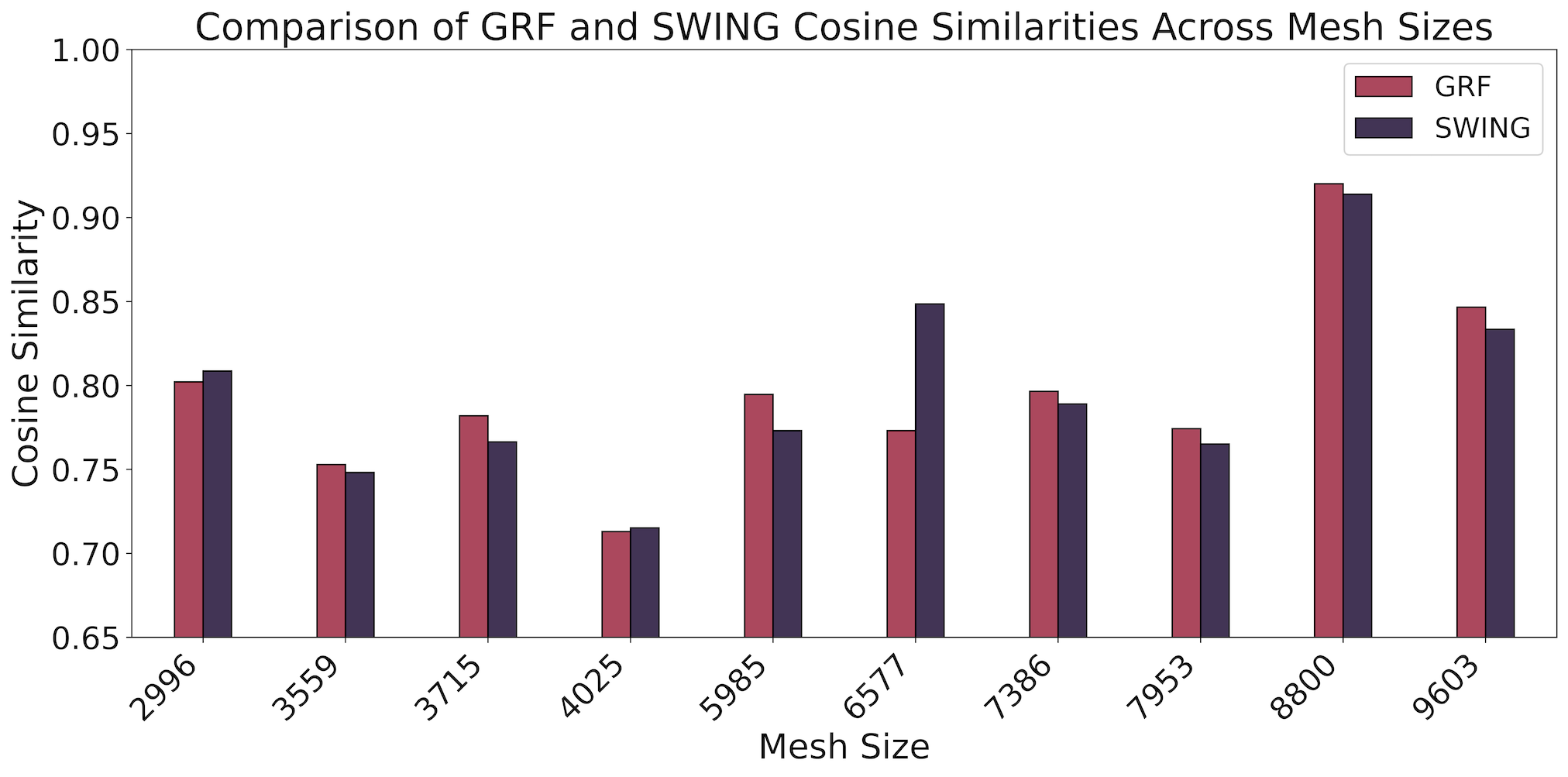}
    \caption{\small{Cosine Similarities for some larger meshes. SWING performs at-par with GRFs. }}
    \label{fig:mesh_cosine}
\end{figure}

\begin{figure}[t]
    \centering
    \includegraphics[width=\linewidth]{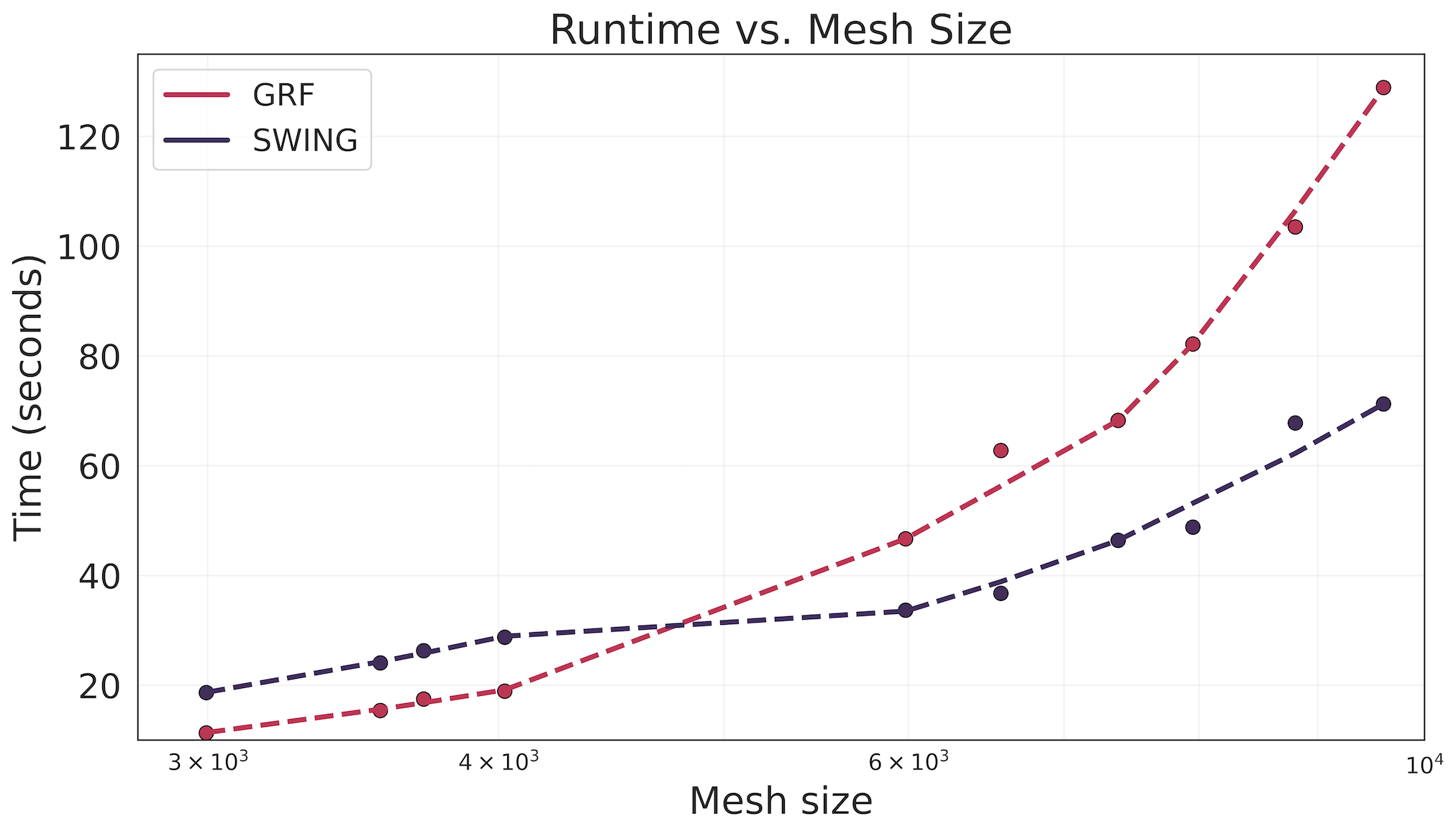}
    \caption{\small{Kernel construction time on large meshes, measured on CPU. SWING demonstrates superior scalability compared to GRF.}}
    \label{fig:mesh_time}
\end{figure}

\paragraph{ViT Experiments.}
We evaluate the performance of SWING on the Vision Transformer \citep[ViT; ][]{dosovitskiy2020image}, where the image patches form an implicit grid graph with edges connecting spatial neighbors. Since the graph topology is static, the random walks required for SWING are pre-computed and frozen, introducing negligible overhead. 
This approximate kernel is subsequently normalized and utilized as a soft mask $\mathbf{M}$, which is added to the standard attention logic:
$
    \text{Attention}(\mathbf{Q}, \mathbf{K}, \mathbf{V}) = \text{softmax}\left(\frac{\mathbf{QK}^\top}{\sqrt{d}} + \lambda \mathbf{M}\right)\mathbf{V}
$.

We compare the standard ViT baseline against two GRFs-enhanced model on ImageNet \citep{deng2009imagenet} and Places365 \citep{places}. The first one uses standard GRFs mechanism (ViT-GRFs) while the second uses our approximation (ViT-SWING). As shown in Table \ref{tab:vit_swing_results}, ViT-SWING consistently improves performance. These results demonstrate that incorporating implicit graph topological information via SWING effectively boosts the predictive capability of the Vision Transformer model.

\begin{table}[h]
    \centering
    \caption{Top-1 accuracy on ImageNet and Places365 validation sets. We compare the standard Vision Transformer (ViT) against two GRFs-enhanced variants. the SWING based one consistently outperforms the baseline across both datasets without requiring explicit graph materialization.}
    \label{tab:vit_swing_results}
    \begin{tabular}{lcc}
        \toprule
        Model & ImageNet & Places365 \\
        \midrule
        ViT & 80.10 & 55.78 \\
        ViT-GRFs & 80.09 & 55.39 \\
        ViT-SWING (ours) & \textbf{80.34} & \textbf{55.97} \\
        \bottomrule
    \end{tabular}\vspace{-3mm}
\end{table}
\section{Conclusion}
\label{sec:conclusion}
We proposed in this paper SWING algorithms: Space Walks for Implicit Network Graphs.
SWING is a relaxation of the prominent class of Graph Random Features (GRFs) methods, trading discrete walks on graph's nodes for walks in the $d$-dimensional embedding space of the graph's nodes, to obtain computational efficiency. SWING leverages several techniques such as: Gumbel softmax distributions, Fourier analysis and random features to achieve good approximation of the original combinatorial GRF algorithm. Our empirical evaluation confirms presented theoretical analysis, showing that significant computational speedups and no loss of accuracy can be obtained with SWING.

\section*{Impact Statement}

This paper presents work whose goal is to advance the field of Machine
Learning. There are many potential societal consequences of our work, none
which we feel must be specifically highlighted here.
Since SWING completely bypasses all strictly combinatorial computations (e.g. random walks on graphs' nodes) via attention-based relaxations, it should lead to even wider adoption of the general class of GRF methods in downstream Machine Learning applications (heavily leveraging modern accelerators).

\section*{Acknowledgments}
AM and CA were supported by the Swiss National Science Foundation project FNS 204061, \emph{HORD GNN: Higher-Order Relations and Dynamics in Graph Neural Networks}.

\bibliography{references}
\bibliographystyle{icml2026}

\newpage
\appendix
\onecolumn

\section{Factorizing Fréchet Noise}\label{appx: gumbel decomposition}

For each $\mathbf{p}$, each point $\mathbf{p}_i$ in the point cloud, and each time step $t$, a different sample from the Gumbel distribution should be drawn. To avoid $\mathcal{O}(N^2)$ complexity, we decompose $\exp(\frac{\tau^{t}_{i}}{\sigma^{2}})$ into factors that multiply the random features $\phi$.

For simplicity, we denote by $\phi_j$ the random features of the evolving trajectory and by $\phi_i$ the random features of each point in the point cloud, and we omit the time index $t$. We aim to decompose $\exp(\frac{\tau_{ij}}{\sigma^{2}})$ into two random factors $a_i \sim p_a$ and $b_j \sim p_b$ such that the product $a_i b_j$ has the same distribution as $\exp(\frac{\tau_{ij}}{\sigma^{2}})$. Note that if $\tau \sim \text{Gumbel}$, then $\exp(\frac{\tau}{\sigma^{2}})$ follows a Fréchet distribution $\mathcal{F}_{\sigma^2}$ with scale parameter $1$ and shape parameter $\sigma^2$.

The most trivial decomposition is $a_i \sim \mathcal{F}_{\sigma^2}$ and $b_j = 1$ (or vice versa). However, at each step, all points $\mathbf{p}$ would then be biased toward the same point cloud element corresponding to the largest Fréchet sample. Figure~\ref{fig:frechet decompositions} (left) illustrates the reconstruction of $\exp(\frac{\tau_{ij}}{\sigma^{2}})$ under this trivial factorization.

\begin{figure}
    \centering
    \includegraphics[width=0.85\linewidth]{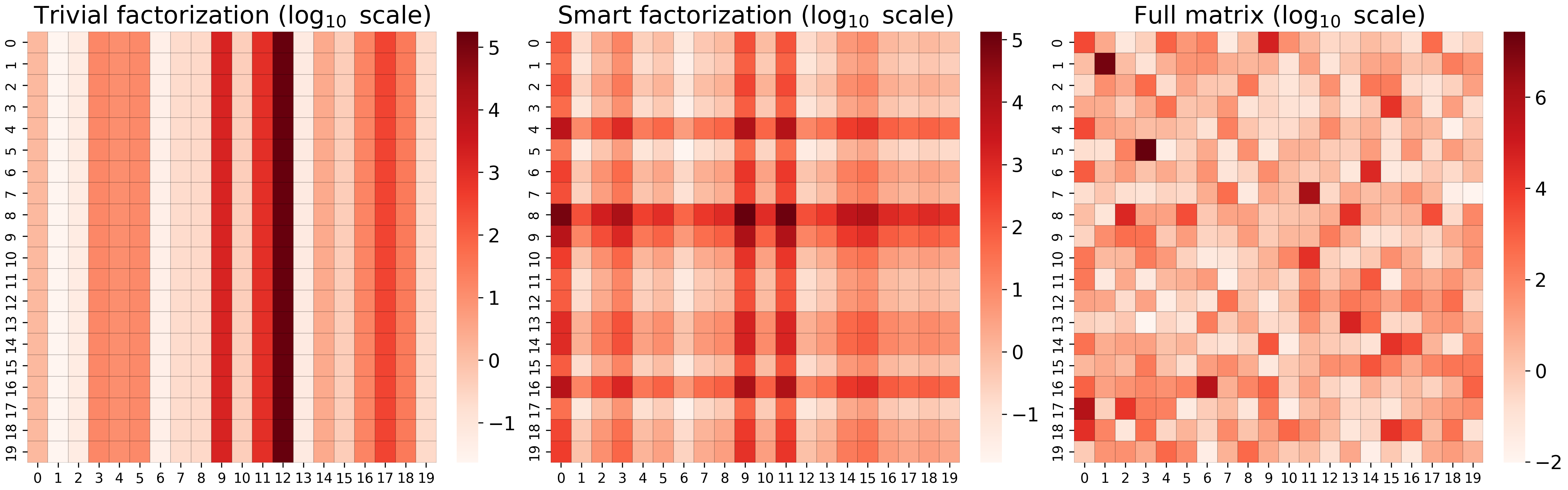}
    \caption{Comparison of matrix structures. Left: a trivial factorization concentrates mass along a few dominant directions. Center: the proposed factorization reveals structured interactions across factors. Right: the full matrix, shown for reference, where each entry is independently sampled requiring $\mathcal{O}(N^2)$ complexity.}
    \label{fig:frechet decompositions}
\end{figure}

To reduce this bias, we consider non-trivial distributions. The following proposition shows one such decomposition.

\begin{proposition}
\label{lemma:gumbel-factor}
Let $P_a$ and $P_b$ be two distributions with probability density functions
\begin{align}
\label{eq: pa}
    p_a(y) &= \sigma^2 y^{-1-2\sigma^2} \exp\left(-\frac{1}{2y^{2\sigma^2}}\right), \\
\label{eq: pb}
    p_b(y) &= \sqrt{\frac{2}{\pi}} \sigma^2 y^{-1-\sigma^2} \exp\left(-\frac{1}{2y^{2\sigma^2}}\right).
\end{align}
If $a \sim P_a$ and $b \sim P_b$ are independent, then $a \cdot b \sim \mathcal{F}_{\sigma^2}$.
\end{proposition}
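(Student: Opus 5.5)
The plan is to reduce everything to elementary distributions via the power map $y \mapsto y^{-2\sigma^2}$, under which both $P_a$ and $P_b$ become chi-square laws. The target is then a classical product identity for an exponential variable.

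First, identify the target law. If $\tau$ is standard Gumbel, then for $y>0$
$$P\bigl(\exp(\tau/\sigma^2)\le y\bigr)=P(\tau\le \sigma^2\log y)=\exp\bigl(-y^{-\sigma^2}\bigr).$$
So $\mathcal{F}_{\sigma^2}$ is Fréchet with shape $\sigma^2$. Equivalently, $X\sim\mathcal{F}_{\sigma^2}$ if and only if $X^{-\sigma^2}\sim\mathrm{Exp}(1)$. It therefore suffices to show $(ab)^{-\sigma^2}\sim\mathrm{Exp}(1)$.

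Second, change variables in each factor. Set $U=a^{-2\sigma^2}$ and $V=b^{-2\sigma^2}$. With $y=U^{-1/(2\sigma^2)}$, the Jacobian is $\frac{1}{2\sigma^2}U^{-1/(2\sigma^2)-1}$. Since $y^{-1-2\sigma^2}=U^{1/(2\sigma^2)+1}$, the factor $\sigma^2$ and all powers of $U$ cancel, leaving the density $\tfrac12 e^{-U/2}$. Thus $U\sim\chi^2_2$, i.e. $U=2E$ with $E\sim\mathrm{Exp}(1)$.

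The same computation for $b$ uses $y^{-1-\sigma^2}=V^{1/(2\sigma^2)+1/2}$. It gives the density $\frac{1}{\sqrt{2\pi}}V^{-1/2}e^{-V/2}$, so $V\sim\chi^2_1$, i.e. $V=Z^2$ with $Z\sim\mathcal{N}(0,1)$. This is where the constant $\sqrt{2/\pi}$ in $p_b$ is checked. By independence of $a$ and $b$,
$$(ab)^{-\sigma^2}=\sqrt{UV}=\sqrt{2E}\,|Z|,$$
with $E$ and $Z$ independent.

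Third, the key step, which I expect to be the only nontrivial one, is showing $\sqrt{2E}\,|Z|\sim\mathrm{Exp}(1)$. I would condition on $Z$ and use that $\sqrt{2E}$ is Rayleigh, so $P(\sqrt{2E}>r)=e^{-r^2/2}$. This gives
$$P\bigl(\sqrt{2E}|Z|>s\bigr)=\mathbb{E}\Bigl[e^{-s^2/(2Z^2)}\Bigr]=\frac{2}{\sqrt{2\pi}}\int_0^\infty e^{-z^2/2-s^2/(2z^2)}\,dz.$$
I would then invoke the classical integral $\int_0^\infty e^{-\alpha z^2-\beta/z^2}dz=\tfrac12\sqrt{\pi/\alpha}\,e^{-2\sqrt{\alpha\beta}}$. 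It can be proved by substituting $w=\sqrt{\alpha}z-\sqrt{\beta}/z$, or by differentiating in $\beta$ and solving the resulting ODE. With $\alpha=\tfrac12$ and $\beta=s^2/2$, the right-hand side is $\tfrac12\sqrt{2\pi}\,e^{-s}$, so the tail equals $e^{-s}$.

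Finally, unwind the change of variables:
$$P(ab\le y)=P\bigl((ab)^{-\sigma^2}\ge y^{-\sigma^2}\bigr)=\exp\bigl(-y^{-\sigma^2}\bigr),$$
which is exactly the CDF of $\mathcal{F}_{\sigma^2}$.
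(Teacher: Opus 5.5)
Your proof is correct, and it takes a genuinely different route from the paper.

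The paper works with Mellin transforms. It computes $M_{\mathcal{F}_{\sigma^2}}(s)=\Gamma\bigl(\frac{1-s}{\sigma^2}+1\bigr)$ and splits this Gamma factor with the Legendre duplication formula. It identifies the two resulting factors as Mellin transforms of scaled powers of a Fréchet variable and of a half-normal variable. It then recovers $p_a,p_b$ by change of variables, implicitly using that the Mellin transform determines the law.

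You instead apply the power map $y\mapsto y^{-2\sigma^2}$, which turns $P_a,P_b$ into $\chi^2_2,\chi^2_1$. This reduces the claim to $\sqrt{2E}\,|Z|\sim\mathrm{Exp}(1)$, which you verify by conditioning and the integral $\int_0^\infty e^{-\alpha z^2-\beta/z^2}\,dz$. That identity is exactly the probabilistic content of the duplication formula: its $k$-th moments $2^{k/2}\Gamma(1+\tfrac k2)\cdot 2^{k/2}\Gamma(\tfrac{k+1}{2})/\sqrt{\pi}$ collapse to $k!$ by duplication. So the two proofs encode the same fact.

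\textbf{What each route buys.} The paper's route is constructive: it explains where $P_a$ and $P_b$ come from. It would also extend, for example via Gauss's multiplication formula, to factorizations into more than two factors. Yours only verifies the given densities, but it is fully elementary, needs no Mellin uniqueness theorem, and checks every constant, including $\sqrt{2/\pi}$.

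Your route also pins down $b\stackrel{d}{=}|Z|^{-1/\sigma^2}$ with $Z\sim\mathcal{N}(0,1)$. The paper's sketch writes $b$ as $2^{-1/(2\sigma^2)}\eta^{-1/\sigma^2}$ with $\eta$ ``standard half-normal''. That matches $M_{p_b}$ only if $\eta$ is the half-normal from $\mathcal{N}(0,\tfrac12)$; with $\eta=|\mathcal{N}(0,1)|$ it is off by the factor $2^{-1/(2\sigma^2)}$. The densities $p_a,p_b$ themselves are correct.
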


\begin{proof}
A proof could be obtained directly using the product distribution formula for independent random variables~\cite{feller1991introduction}. Instead, we provide a constructive proof based on the Mellin transform~\cite{epstein1948some}.

The Mellin transform of a positive random variable $\xi$ with continuous pdf $f(y)$ is defined as
\begin{equation}
    M_f(s) = \mathbb{E}_f \left[ y^{s-1} \right] = \int_0^{\infty} y^{s-1} f(y)\, dy.
\end{equation}
With a slight abuse of notation, we sometimes index $M$ by the distribution rather than by its pdf. The Mellin transform is particularly useful for analyzing products of random variables. Specifically, let $\xi_a$ and $\xi_b$ be independent, positive random variables with probability density functions $p_a$ and $p_b$ respectively. Assuming the relevant moments exist, the Mellin transform of the product $\xi_a \cdot \xi_b$ is $M_{p_a}(s) M_{p_b}(s)$.

Using the definition, one can easily verify that the Mellin transform of the Fréchet distribution $\mathcal{F}_{\sigma^2}$ is
\begin{equation}
    M_{\mathcal{F}_{\sigma^2}}(s) = \Gamma\left( \frac{1-s}{\sigma^2} + 1 \right).
\end{equation}

Applying the duplication formula for the Gamma function, we can factor $M_{\mathcal{F}_{\sigma^2}}$ into two terms:
\begin{equation}
    M_{\mathcal{F}_{\sigma^2}}(s) =
    \underbrace{2^{-\frac{s-1}{2\sigma^2}}\Gamma\left( \frac{1}{2\sigma^2} - \frac{s}{2\sigma^2} + 1 \right)}_{M_{p_a}(s)}
    \cdot
    \underbrace{\frac{2^{-\frac{s-1}{2\sigma^2}}}{\sqrt{\pi}}\Gamma\left( \frac{1}{2\sigma^2} - \frac{s}{2\sigma^2} + \frac{1}{2} \right)}_{M_{p_b}(s)}.
\end{equation}

We recall two basic properties of the Mellin transform. Let $\xi$ be a random variable with pdf $f$ and Mellin transform $M_f$:
\begin{enumerate}
    \item[M1)] For $a>0$, the random variable $a\xi$ has Mellin transform $a^{s-1}M_f(s)$.
    \item[M2)] For $\alpha \in \mathbb{R}$, the random variable $\xi^\alpha$ has Mellin transform $M_f(\alpha s - \alpha + 1)$.
\end{enumerate}

Using M1 and M2, it is straightforward to show that if $\phi_{\sigma^2} \sim \mathcal{F}_{\sigma^2}$, then the random variable $2^{-\frac{1}{2\sigma^2}} \phi_{\sigma^2}^{1/2}$ has Mellin transform $M_{p_a}$. Similarly, if $\eta$ follows the standard half-normal distribution, then $2^{-\frac{1}{2\sigma^2}} \eta^{-\frac{1}{\sigma^2}}$ has Mellin transform $M_{p_b}$. Applying the change-of-variables formula yields the densities in Equations~\ref{eq: pa} and~\ref{eq: pb}.

\end{proof}

In Figure \ref{fig:frechet decompositions} (center) one can see how this factorization reduces the structure of the (implicit) random matrix.

\begin{figure*}[!t]
    \centering
    \includegraphics[width=0.95\linewidth]{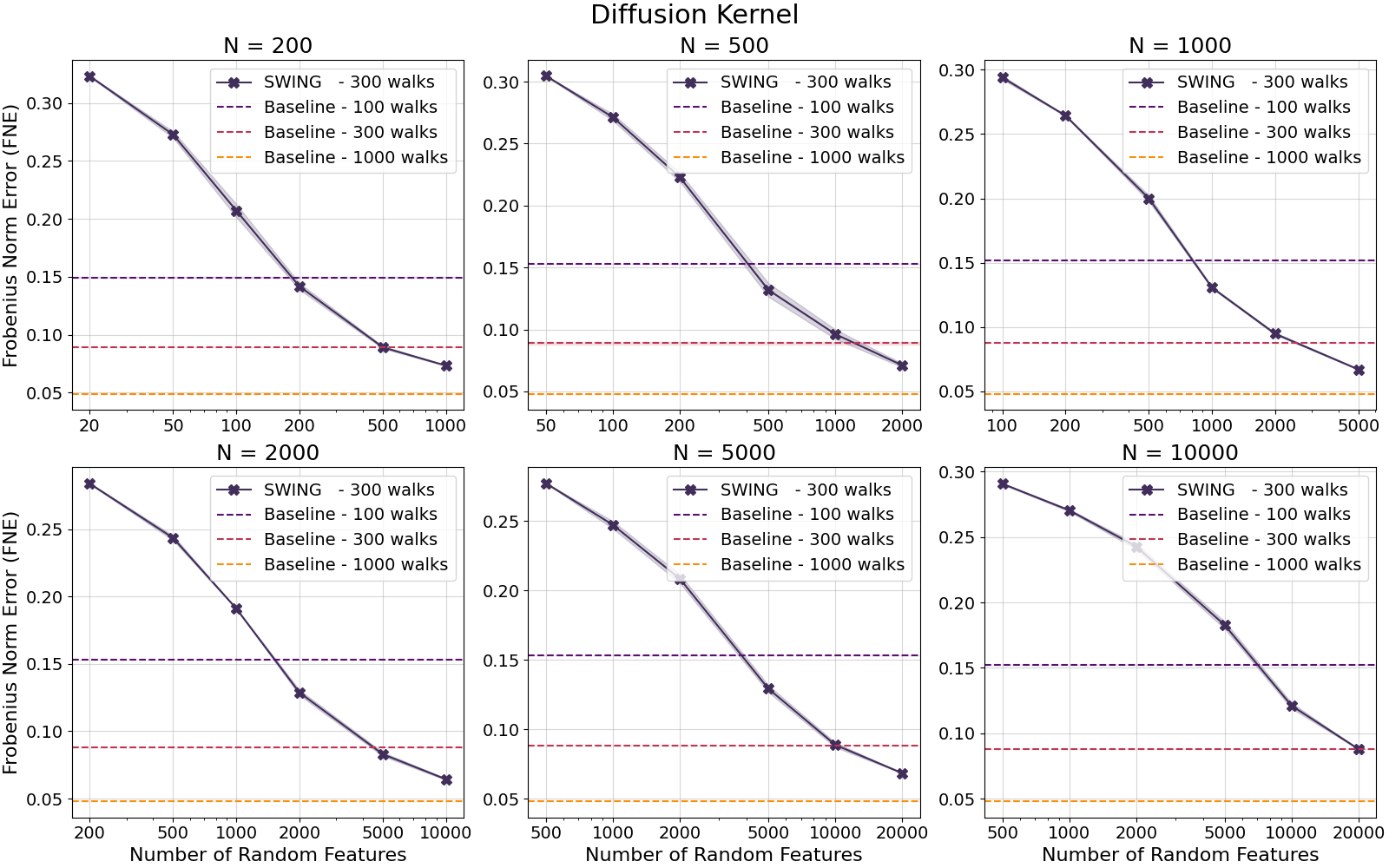}
    \caption{Frobenius norm error on synthetic point clouds of increasing size for the diffusion kernel.}
    \label{fig: synthetic diffusion}
\end{figure*}

\section{Importance Sampling for Gaussian Kernel}\label{app: importance sampling}
We observe that, when estimating the Gaussian kernel using positive random features, a systematic pattern emerges in the estimation quality. In particular, vectors with larger norms tend to yield kernel estimates that are systematically lower than the true values. This behavior can be explained intuitively through the following example.

Consider one-dimensional vectors $x$ and $y$, the Gaussian kernel
\[
K(x,y) = \exp\left(-\frac{(x-y)^2}{2}\right),
\]
and the positive random feature
\[
\eta_+(u) = \exp\left(-u^2 + \omega u\right).
\]

The single-sample kernel estimate for $K(x,x)$ is
\[
\hat{K}(x,x) = \exp\left(2x(\omega - x)\right).
\]
For simplicity, assume $x > 0$ and $\omega > 0$. Since the true kernel value is $K(x,x)=1$, accurate estimates occur when $\omega$ is sampled close to $x$. However, because $\omega \sim \mathcal{N}(0,1)$, such samples are extremely unlikely for points with large norms.

For this reason, we introduce importance sampling. Specifically, we propose, and find effective, sampling from the proposal distribution:
\[
\omega_1, \dots, \omega_r \sim \mathcal{N}\bigl(0, \alpha \, \mu, \mathbf{I}_d\bigr),
\]
where $\mu$ denotes the median norm of the data points and $\alpha$ is a hyperparameter.

\section{Kernel Estimation on Synthetic Datasets for Additional Kernels}
\label{app: synthetic different kernels}

\begin{figure*}[t]
    \centering
    \includegraphics[width=0.95\linewidth]{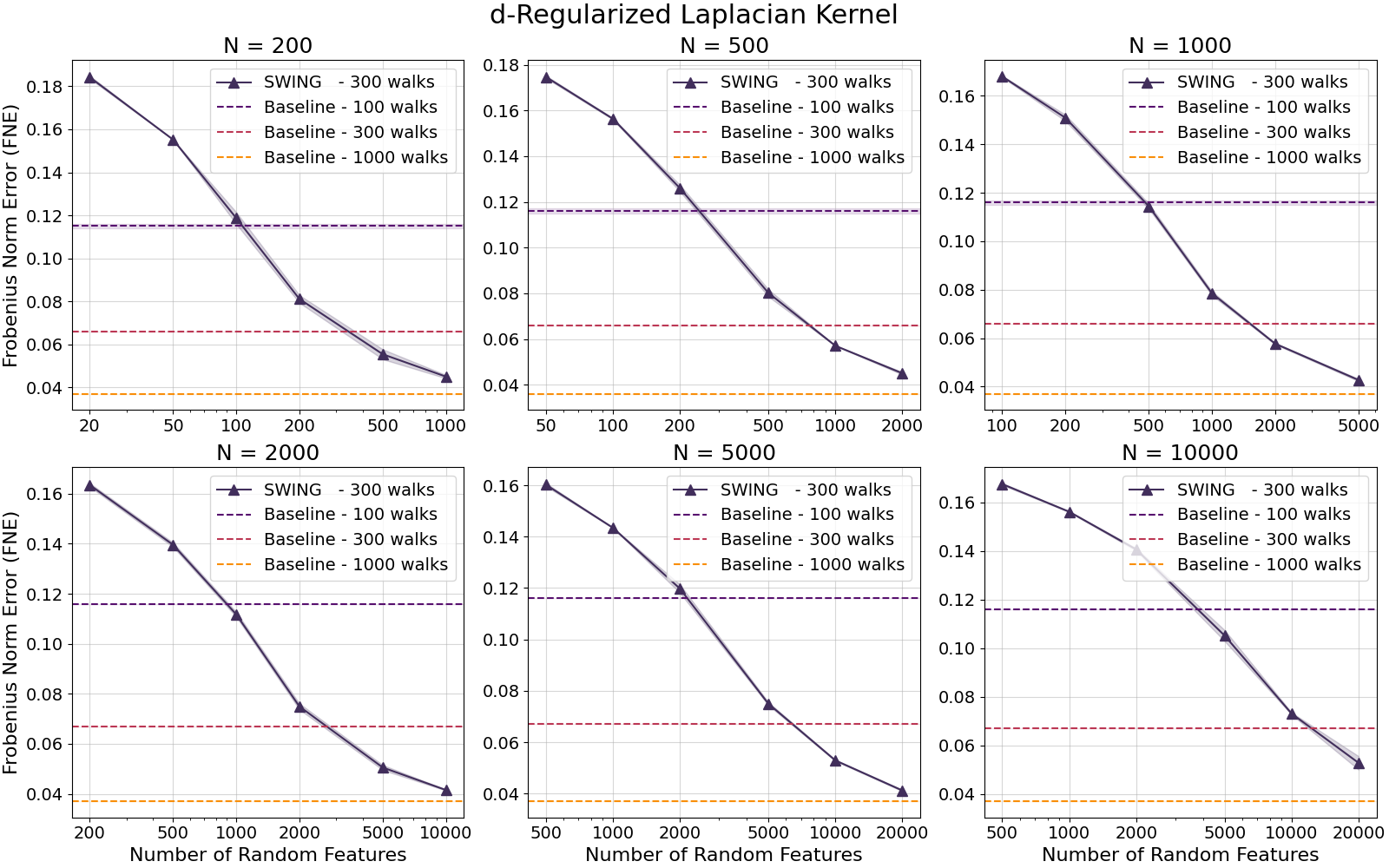}
    \caption{Frobenius norm error on synthetic point clouds of increasing size for the $d-$regularized laplacian kernel.}
    \label{fig: synthetic d-reg lap}
\end{figure*}

This appendix reports additional results on synthetic point clouds for the diffusion kernel and the $d$-regularized Laplacian kernel. All experiments follow the same experimental protocol described in Section~\ref{sec:experiments}: point clouds are generated with constant expected density, kernel weights are constructed using a Gaussian affinity, and performance is evaluated using the relative Frobenius Norm Error (FNE). The same values of $p_{\text{halt}}$, number of random walks, and random feature constructions are used throughout.  For the hyperparameter search, we evaluated five Gumbel--Softmax temperatures uniformly spaced in the standard interval $[0.1,1]$. For the scaling parameter, we evaluated five values logarithmically spaced in $[0.1,10]\times N^{1/3}$.

Figures~\ref{fig: synthetic diffusion} and~\ref{fig: synthetic d-reg lap} report the kernel approximation error as a function of the number of random features for the diffusion kernel and the $d$-regularized Laplacian kernel, respectively, across increasing point cloud sizes. As in the $p$-step random walk case, the approximation error decreases monotonically as the number of random features increases. For a fixed number of random walks, the proposed method consistently attains lower approximation error than the GRF baseline for a suitable range of random feature dimensions.

\begin{figure*}[t]
    \centering
    \includegraphics[width=0.95\linewidth]{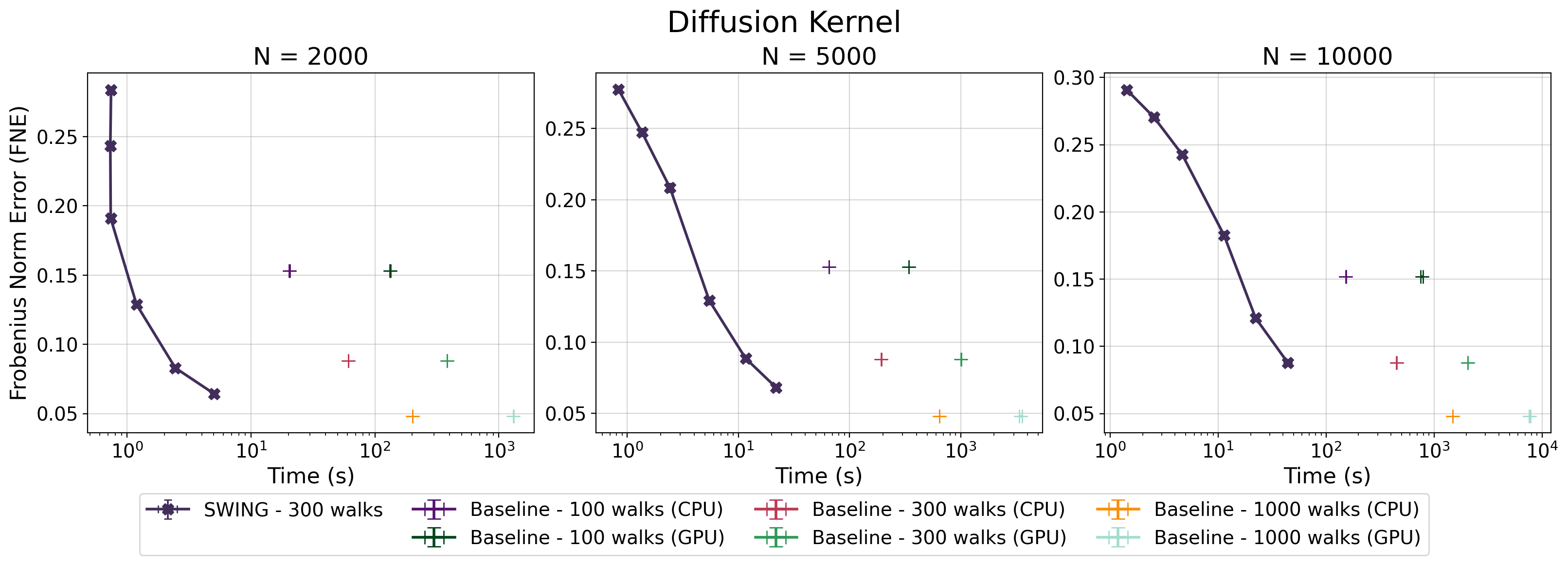}
    \caption{Trade-off between approximation error (FNE) and computation time for the diffusion kernel on synthetic point clouds of varying size.}
    \label{fig: time-FNE diffusion}
\end{figure*}

\begin{figure*}[t]
    \centering
    \includegraphics[width=0.95\linewidth]{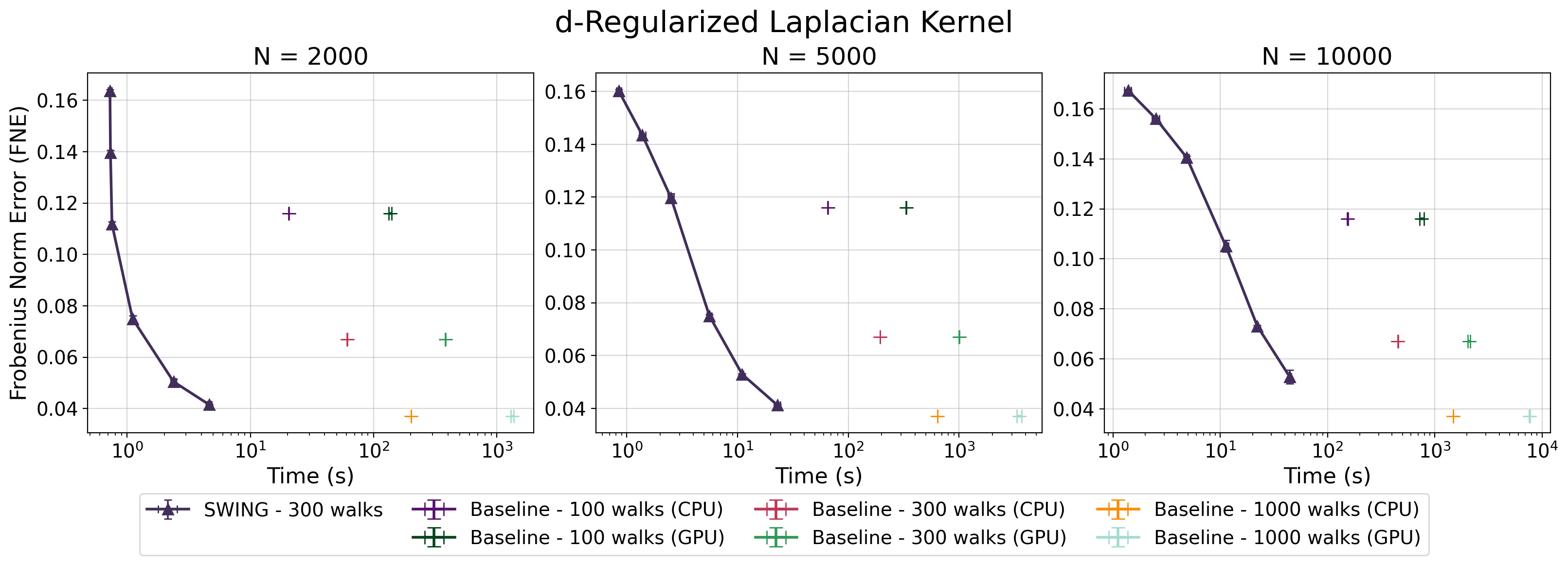}
    \caption{Trade-off between approximation error (FNE) and computation time for the $d-$regularized laplacian kernel on synthetic point clouds of varying size.}
    \label{fig: time-FNE d-reg lap}
\end{figure*}

Figures~\ref{fig: time-FNE diffusion} and~\ref{fig: time-FNE d-reg lap} show the corresponding trade-off between approximation error and computation time. In both cases, the proposed method traces an empirical Pareto frontier, allowing accuracy–time trade-offs to be adjusted smoothly by varying the number of random features. The same qualitative behavior observed for the $p$-step random walk kernel holds here as well, with substantial inference-time speedups compared to the baseline methods.

\section{Additional Experimental Details}
In this section, we will provide additional experimental details for experiments of Section \ref{sec: downstream exp}. 

\subsection{Vertex Normal Prediction}\label{sec:vert_norm_app}
In this sub-section, we present implementation details for vertex normal prediction experiments. All the experiments are run on free Google Colab with 12Gb of RAM on CPU.

For this task, we choose 40 meshes for 3D-printed objects of varying sizes from the Thingi10K dataset. Following~\citet{gr-1}, we choose the following meshes corresponding to the ids given by :

\texttt{[60246, 85580, 40179, 964933, 1624039, 91657, 79183, 82407, 40172, 65414, 90431,
74449, 73464, 230349, 40171, 61193, 77938, 375276, 39463, 110793, 368622, 37326,
42435, 1514901, 65282, 116878, 550964, 409624, 101902, 73410, 87602, 255172, 98480,
57140, 285606, 96123, 203289, 87601, 409629, 37384, 57084]
}

For the baseline, we use a kernel of the form $\mathbf{K} = \text{exp}(\lambda \mathbf{W})$, where $\mathbf{W} = e^{-\frac{||x_i - x_j||^2}{\sigma^2}}$. We do a small search for the $\sigma$ and $\lambda$ for the baseline kernel.  We also compare against Al Mohy and Lanczos \citep{musco2018stability} method as they are widely popular algorithms to compute the action of matrix exponentials on a vector. We use the SciPy implementation of Al Mohy.  For both GRF and SWING, the number of walks are chosen from the subset $\{16, 32, 64, 128, 256, 300 \}$ and the halting probability of the walk is $0.1$. Additionally we do further hyperparameter tuning on the number of random features which is chosen from $\{8, 16, 32, 64, 128, 256, 512 \}$. 

Finally we show runtime comparison of SWING against the explicit kernel construction (BF) and GRF. SWING scales better than both BF and GRF while performing similarly (Fig.~\ref{fig:mesh_times_full}). For this runtime experiment, we fix the number of random features to be 128, number of walks to be 200, halting probability to be $0.15$. The $\sigma$ and $\lambda$ are same as in the previous experiment. 

\begin{table}[!t]
\caption{Cosine Similarity for Meshes. SWING performs at par as GRF while being more computationally efficient. We also compare against Al Mohy's algorithm to compute the action of the matrix exponential on a vector. BF refers to the brute force version where the kernel is explicitly materialized followed by matrix-vector multiplication.}
\label{tab:metrics_vertec_normal_all}
\centering
\begin{tabular}{lrrrrrrrrrr}
\toprule
MESH SIZE & 64 & 99 & 146 & 148 & 155 & 182 & 222 & 246 & 290 & 313 \\
\midrule
BF & 0.8461 & 0.8361 & 0.8210 & 0.6308 & 0.6604 &
0.9702 & 0.8913 & 0.4280 & 0.5138 & 0.3221 \\
Al Mohy & 0.8457 & 0.8225 & 0.8210 & 0.6196 &
0.6493 & 0.9702 & 0.8904 & 0.4277 & 
0.5075 & 0.3087 \\
Lanczos & 0.7855 & 0.8033 & 0.6797 & 0.4504 &
0.4994 & 0.9403 & 0.4866 & 0.2260 & 0.3438 &
0.2207 \\
GRF & 0.8255 & 0.8266 & 0.8321 & 0.5643 &
0.5869 & 0.9805 & 0.8343 & 0.4403 & 0.4852 & 0.3105 \\
SWING (ours) & 0.8301 & 0.8275 & 0.8197 &
0.5667 & 0.6226 & 0.9574 & 0.8226 &
0.4264 & 0.5145 & 0.3089 \\
\midrule
\midrule
MESH SIZE & 362 & 482 & 502 & 518 & 614 & 639 & 777 & 942 & 992 & 1012 \\
\midrule
BF & 0.7725 & 0.9883 & 0.8710 & 0.1363 & 0.6163 &
0.9102 & 0.7135 & 0.7890 & 0.8501 & 0.7158 \\
Al Mohy & 0.7588 & 0.9844 & 0.8693 & 0.1170 &
0.6129 & 0.9077 &  0.7068 & 0.7633 & 0.8401 & 0.7150\\
Lanczos & 0.1466 & 0.1144 & 0.2536 & 0.1613 &
0.0384 & 0.1531 & 0.1271 & 0.6593 & 0.7014 &
0.7114 \\
GRF & 0.7567 & 0.9497 & 0.7154 & 0.1172 &
0.6266 & 0.9024 & 0.6647 & 0.7681 & 0.8354 &
0.6932 \\
SWING (ours) & 0.7386 & 0.9346 & 0.6872 &
0.0916 & 0.6136 & 0.8847 & 0.6203 &
0.7300 & 0.8336 & 0.6998\\
\midrule
\midrule
MESH SIZE & 1094 & 1192 & 1849 & 2599 & 2626 & 2996 & 3072 & 3559 & 3715 & 4025 \\
\midrule
BF & 0.6765 & 0.6737 & 0.9214 &
0.7601 & 0.8567 & 0.8864 & 0.6240 & 0.8090 &
0.8061 & 0.7221 \\
Al Mohy & 0.6581 & 0.6683 & 0.9139 & 
0.7500 & 0.8503 & 0.8743 & 0.6228 &
0.7925 & 0.7998 & 0.7214 \\
Lanczos & 0.0768 & 0.4463 & 0.3487 & 0.4019 &
0.3847 & 0.6037 & 0.2541 & 0.1262 & 0.2156 &
0.1812 \\
GRF & 0.6280 & 0.6313 & 0.8701 & 0.7296 &
0.8427 & 0.8021 & 0.5726 & 0.7529 & 0.7819 &
0.7131 \\
SWING (ours) & 0.6255 & 0.6224 & 0.8401 &
0.7067 & 0.8317 & 0.8086 & 0.5332 & 0.7483 &
0.7664 & 0.7152 \\
\midrule
\midrule
MESH SIZE & 5155 & 5985 & 6577 & 6911 & 7386 & 7953 & 8011 & 8261 & 8449 & 8800 \\
\midrule
BF & 0.7927 & 0.8133 & 0.8960 & 0.9306 & 
0.8161 & 0.7991 & 0.7553 & 0.7153 &
0.8090 & 0.9361 \\
Al Mohy & 0.7927 & 0.8028 & 0.8801 & 
0.9196 & 0.8026 & 0.7816 & 0.7387 & 
0.7013 & 0.7899 & 0.9257 \\
Lanczos & 0.5186 & 0.7149 & 0.6113 & 0.7157 &
0.1435 & 0.4093 & 0.5985 & 0.0718 & 0.6772 &
0.5130 \\
GRF & 0.7749 & 0.7947 & 0.7730 &
0.9022 & 0.7965 & 0.7743 & 0.7254 &
0.6894 & 0.8019 & 0.9201 \\
SWING (ours) & 0.7532 & 0.7731 &
0.8484 & 0.8399 & 0.7890 & 0.7650 &
0.6975 & 0.6798 & 0.7594 & 0.9137\\
\bottomrule
\end{tabular}

\end{table}

\begin{figure*}[t]
    \centering
    \includegraphics[width=0.8\linewidth]{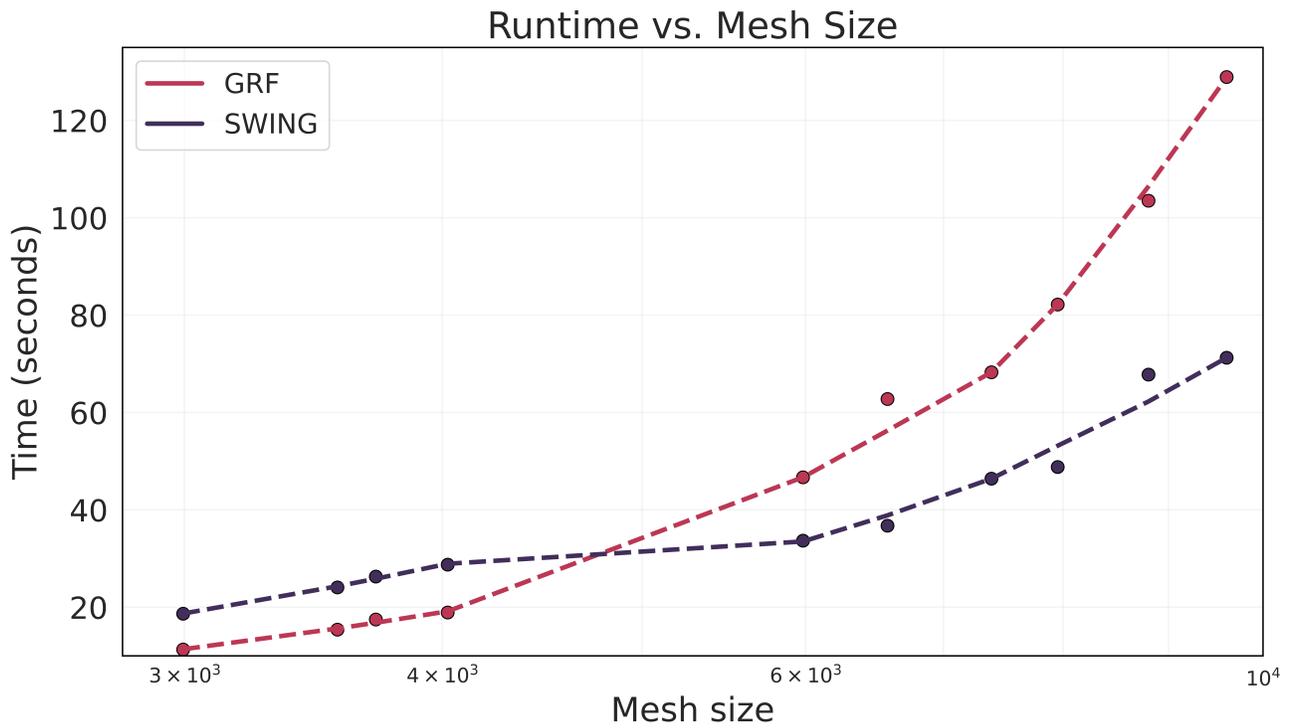}
    \caption{Runtime on CPU to create the kernel matrix as a function of mesh size. SWING performs faster than both the explicit kernel construction method as well as GRF as the mesh size grows.}
    \label{fig:mesh_times_full}
\end{figure*}

\end{document}